\newtheorem{assumption}{\bf Assumption}
\newtheorem{definition}{\bf Definition}
\newtheorem{theorem}{\bf Theorem}
\newtheorem{corollary}{\bf Corollary}
\newcommand{\revise}[1]{#1}
\begin{document}

\title{Preference-based Multi-Objective \\ Reinforcement Learning}


\author{Ni Mu$^*$, Yao Luan$^*$, Qing-Shan Jia$^\dagger$ 
\thanks{N. Mu, Y. Luan and Q. Jia are with the Center for Intelligent and Networked System (CFINS), Department of Automation, Beijing National Research Center for Information Science and Technology, Beijing Key Laboratory of Embodied Intelligence Systems,  Tsinghua University, Beijing 100084, China, {\tt\small \{mn23@mails., luany23@mails., jiaqs@\} tsinghua.edu.cn}. 
$^*$N. Mu and Y. Luan contributed equally. $^\dagger$Q. Jia is the corresponding author. }
}

\markboth{Journal of \LaTeX\ Class Files,~Vol.~14, No.~8, August~2021}%
{Shell \MakeLowercase{\textit{et al.}}: A Sample Article Using IEEEtran.cls for IEEE Journals}


\maketitle

\begin{abstract}

Multi-objective reinforcement learning (MORL) is a structured approach for optimizing tasks with multiple objectives.
However, it often relies on pre-defined reward functions, which can be hard to design for balancing conflicting goals and may lead to oversimplification.
Preferences can serve as more flexible and intuitive decision-making guidance, eliminating the need for complicated reward design.
This paper introduces preference-based MORL (Pb-MORL), which formalizes the integration of preferences into the MORL framework. 
We theoretically prove that preferences can derive policies across the entire Pareto frontier.
To guide policy optimization using preferences, our method constructs a multi-objective reward model that aligns with the given preferences. 
We further provide theoretical proof to show that optimizing this reward model is equivalent to training the Pareto optimal policy.
Extensive experiments in benchmark multi-objective tasks, a multi-energy management task, \revise{and an autonomous driving task on a multi-line highway} show that our method performs competitively, surpassing the oracle method, which uses the ground truth reward function. This highlights its potential for practical applications in complex real-world systems.

\end{abstract}

\def\abstractname{Note to Practitioners}
\begin{abstract}

Decision-making problems with multiple conflicting objectives are common in real-world applications, e.g., 
energy management must balance system lifespan, charge-discharge cycles, and energy procurement costs; 
autonomous driving vehicles must balance safety, speed, and passenger comfort. 
While multi-objective reinforcement learning (MORL) is an effective framework for these problems, its dependence on pre-defined reward functions can limit its application in complex situations, as designing a reward function often fails to capture the full complexity of the task fully.
This paper introduces preference-based MORL (Pb-MORL), which utilizes user preference data to optimize policies, thereby eliminating the complexity of reward design. 
Specifically, we construct a multi-objective reward model that aligns with user preferences and demonstrate that optimizing this model can derive Pareto optimal solutions.
Pb-MORL is effective, easy to deploy, and is expected to be applied in complex systems, e.g., multi-energy management through preference feedback \revise{and adaptive autonomous driving policies for diverse situations}.

\end{abstract}

\begin{IEEEkeywords}
\revise{Reinforcement learning, Multi-objective optimization, Preference-based optimization, Pareto efficiency. }
\end{IEEEkeywords}

\section{Introduction} \label{sec:intro}

Multi-objective optimization is pervasive in real-world applications \cite{multi-objective-realworld, LIU2023119521, guan2006constrained}.
For example, in an energy system, the goal is to maximize the system lifespan and minimize charge-discharge cycles while simultaneously reducing energy procurement costs \cite{baghaee2016reliability}. 
Autonomous vehicles need to provide safe, fast, and comfortable rides at the same time \cite{MO_auto_driving}.
However, representing these objectives with a single reward can be difficult and may lose important information \cite{Bpref, memarian2021self, mannion2018reward}. 
In addition, creating a scalar reward function for each control objective is challenging and often results in oversimplification \cite{pebble, dewey2014reinforcement}. Preferences, conversely, offer a more flexible and general way to model the decision-making process \cite{pbrl_basic2017}. Humans can easily provide their preferences, pointing out which outcome they prefer, without compressing all their decision-making information into a single reward function \cite{pbrl_basic2012}. 
Therefore, it is of great practical interest to study multi-objective reinforcement learning based on preference.

However, integrating multi-objective reinforcement learning (MORL) with preference-based learning presents several challenges. 
First, while users can express preferences between pairs of behaviors when focusing on a single objective, establishing a complete ordering among all behaviors is often difficult. This lack of a complete preference makes it hard for algorithms to assess the relative importance of different objectives. 
Additionally, there are often inherent conflicts between objectives, which complicate policy optimization.
Furthermore, obtaining preferences for all objectives may require pairwise comparisons, which can be computationally inefficient as the number of objectives increases, leading to increased complexity in the querying process.
Given these complexities, a significant gap lies in the previous work: 
To the best knowledge of the authors, we did not find any method addressing the above challenges of combining MORL and preference-based optimization, highlighting the need for a novel approach to multi-objective, preference-driven decision-making problems.

\begin{figure*}[t!]
\centering
\includegraphics[width=0.8\linewidth]{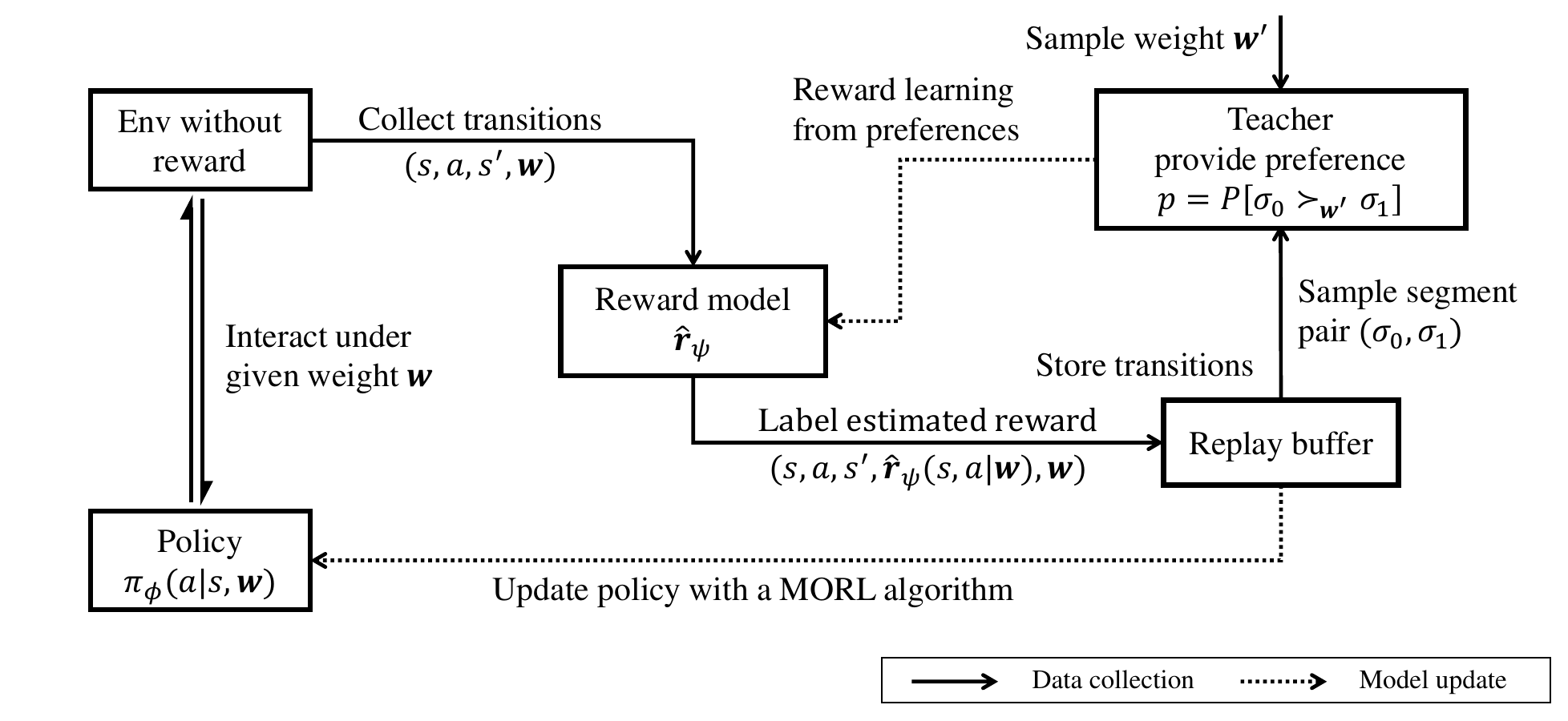}
\caption{
A demonstration of the proposed Pb-MORL framework. 
An explicit multi-objective reward model $\hat{\boldsymbol{r}}_\psi$ is learned using preference data. 
Then, the multi-objective policy $\pi_\phi(a|s,\boldsymbol{w})$ can be updated through any MORL algorithm based on the reward model. 
In this figure, $\boldsymbol{w}$ denotes the weight vector,
$(\sigma_0, \sigma_1)$ denotes the segment pair,
$p$ denotes the preference provided by the teacher.
A detailed introduction to the settings and notations will be provided in the following sections.
}
\label{fig:method_architecture}
\end{figure*}

\subsection{Related Work}

\textbf{Single-objective reinforcement learning with preference.}
Reinforcement learning (RL) \cite{RL_sutton} has gained significant attention in recent years due to its remarkable success in solving challenging problems \cite{2013_RL_atari, EMAPP, 2017_RL_alphazero, luanyao_cac2023, muni_case2024, luan2025efficient}.
Traditional RL algorithms often rely on a pre-defined reward function, which serves as the guidance for policy optimization. However, designing such a reward function can be complex and even impractical \cite{pebble}. 
However, there are two typical categories of RL problems where we face difficulty obtaining the optimal policy. 
The first category of RL problems has a pre-defined reward function, such as cost savings \cite{muni_cac2023}, reducing carbon emissions \cite{guo2024incentive}, or sparse rewards like a ``$+1$'' bonus for reaching the goal in a maze \cite{openai_gym}. 
The challenge in this category is identifying the optimal policy, as task dynamics are often complex and stochastic, while sparse reward signals complicate policy learning from the reward function \cite{RND}.
The second category consists of RL problems where defining the reward function is challenging, such as in robotics systems \cite{pebble} and large language models \cite{DPO}. In these cases, while we want the system behaviors to align with human expectations, formalizing the objective function is often difficult \cite{pebble}.
Preference-based reinforcement learning (PbRL) provides a solution by utilizing user feedback to guide agent behavior, making it suitable for both categories of RL problems. This approach offers preferences that may be more accessible and more naturally aligned with policy optimization than traditional reward signals. 
Early works of PbRL, such as \cite{pbrl_basic2017} and \cite{pbrl_basic2012}, have shown the ability of agents to learn from simple comparisons between pairs of trajectory segments, thereby eliminating the necessity for complex reward engineering.
With the development of deep learning, techniques like pre-training \cite{ibarz2018reward, Bpref, mu2024sepoa} and data augmentation \cite{surf} are employed to improve learning efficiency. Meta-learning approaches \cite{hejna2023few} also enable agents to adapt to new tasks based on past experiences quickly.
Moreover, PbRL has been successfully applied to fine-tune large-scale language models like GPT-3 for challenging tasks, as highlighted by \cite{GPT3_RLHF(?)}.
While PbRL omits reward engineering through leveraging user feedback, it primarily deals with single-objective optimization instead of multi-objective preference modeling.

\textbf{Multi-objective reinforcement learning with explicit reward functions.} 
Multi-objective reinforcement learning (MORL) is a pivotal subfield of reinforcement learning \cite{MO-gym, hayes2022practical, li2024human}, focusing on decision-making problems under multiple objectives.
Envelope Multi-objective Q-learning \cite{EQL} extends the traditional Q-learning algorithm to the multi-objective domain and proves the convergence of its Q-learning algorithm in tabular settings. 
Expected Utility Policy Gradient (EUPG) \cite{EUPG} and Prediction-Guided MORL (PGMORL) \cite{PGMORL} further integrate deep learning into MORL. EUPG incorporates policy gradients to balance current and prospective returns, while PGMORL applies an evolutionary strategy to enhance the Pareto frontier.
Additionally, Pareto Conditioned Networks \cite{pareto_conditioned_networks} and Generalized Policy Improvement Linear Support \cite{GPI-LS} employ neural networks conditioned on target returns to predict optimal actions within deterministic settings.
Despite their advancements, current MORL methods rely on pre-defined multi-objective reward functions, posing challenges for their application in real-world control scenarios.
Extending preferences from single-objective reinforcement learning to multi-objective contexts is feasible, which is the main contribution of this paper.

\begin{table*}[t]
\centering
\caption{Notation Table}
\label{tab:notation}
\begin{tabular}{lll}
\toprule
Symbol & Definition & Description \\
\midrule

\multicolumn{3}{l}{\textit{Multi-Objective RL Elements}} \\
[1ex]

$m$ & 
Number of objectives & 
Dimension of the reward vector. \\
[1ex]

$\boldsymbol{r}(s, a)$ & 
True multi-objective reward vector & 
$\boldsymbol{r}(s, a) \in \mathbb{R}^m$: Ground truth reward signal provided by the environment for each objective. \\
[1ex]

$\boldsymbol{w}$ & 
Weight vector & 
$\boldsymbol{w} \in \mathcal{W} = \{\boldsymbol{w} \in \mathbb{R}^m \mid w_i \geq 0, \sum_i w_i = 1\}$: Vector encoding the relative importance \\
& & (preference) assigned to each objective. \\
[1ex]

$\mathcal{W}$ & 
Weight space & 
Set of all valid weight vectors. \\
[1ex]

$D_w$ & 
Prior weight distribution & 
Distribution over the weight space $\mathcal{W}$ from which weights are sampled during training or \\
& & evaluation. In this study, we assume this distribution to be uniform over $\mathcal{W}$. \\
[1ex]

$\Pi$ & 
Policy space & 
Set of all possible policies. \\
[1ex]

$\Pi^*$ & 
Pareto optimal policy set & 
Set of policies that are not dominated by any other policy in $\Pi$ with respect to all objectives. \\

\midrule
\multicolumn{3}{l}{\textit{Preference Elements}} \\
[1ex]

$\sigma$ & 
Trajectory segment & 
Finite sequence of state-action pairs: $\sigma = \{s_k, a_k, \dots, s_{k+H-1}, a_{k+H-1}\}$ of length $H$. \\
[1ex]

$H$ & 
Segment length & 
Number of steps in a trajectory segment $\sigma$. \\
[1ex]

$p$ & 
Preference label & 
$p \in \{0, 0.5, 1\}$: Human teacher's preference judgment for a pair of segments $(\sigma_0, \sigma_1)$ under \\
& & weight $\boldsymbol{w}$. $p=0$: $\sigma_0$ preferred, $p=1$: $\sigma_1$ preferred, $p=0.5$: no preference/indifferent. \\
[1ex]

$\sigma_0 \succ_{\boldsymbol{w}} \sigma_1$ & 
Preference relation & 
Segment $\sigma_0$ is strictly preferred over segment $\sigma_1$ under weight $\boldsymbol{w}$. \\

\midrule
\multicolumn{3}{l}{\textit{Learned Components}} \\
[1ex]

$\hat{\boldsymbol{r}}_{\psi}(s, a)$ & 
Learned multi-objective reward model & 
$\hat{\boldsymbol{r}}_{\psi}(s, a) \in \mathbb{R}^m$: Model parameterized by $\psi$, trained using preference data to approximate \\
& & the underlying objectives. Used as the reward signal for MORL policy optimization. \\
[1ex]

$\pi_{\phi}(a\|s, \boldsymbol{w})$ & 
Parameterized policy & 
Stochastic policy parameterized by $\phi$, conditioned on the current state $s$ and the weight \\
& & vector $\boldsymbol{w}$ (indicating the desired objective trade-off). Outputs a distribution over actions. \\
[1ex]

$Q_{\theta}(s, a, \boldsymbol{w})$ & 
Multi-objective Q-function & 
$Q_{\theta}(s, a, \boldsymbol{w}) \in \mathbb{R}^m$: State-action value function parameterized by $\theta$. Estimates the vector of \\
& & expected discounted future rewards for each objective, starting from state $s$, taking action $a$, \\
& & and following policy $\pi_{\phi}(\cdot\| \cdot, \boldsymbol{w})$ thereafter. \\
[1ex]

$J$ & 
Optimization objective & 
Scalarized expected return: $J = \mathbb{E}_{\boldsymbol{w} \sim D_w, \tau \sim (\mathcal{P},\pi)} [ \boldsymbol{w}^T \sum_t \gamma^t \hat{\boldsymbol{r}}_{\psi}(s_t, a_t) ]$ (Eq. 7). \\
& & Maximized during policy learning using the learned reward model. \\

\bottomrule
\end{tabular}
\end{table*}

\subsection{Main Contributions}

In this paper, we introduce Preference-based Multi-Objective Reinforcement Learning (Pb-MORL), which integrates preference modeling into Multi-Objective Reinforcement Learning (MORL), as illustrated in Fig. \ref{fig:method_architecture}.
Specifically, we first establish theorems that demonstrate a teacher providing preferences can guide the learning of optimal multi-objective policies (Theorem \ref{thm:1}, \ref{thm:2}).
Furthermore, we propose a method to construct an explicit multi-objective reward model that aligns with the teacher's preferences. Our theoretical proof (Theorem \ref{thm:4}) shows that, in a multi-objective context, if the reward function perfectly matches the teacher's preferences, optimizing this reward is equivalent to learning the optimal policy.
To implement Pb-MORL, we combine the Envelope Q Learning (EQL) method \cite{EQL} with our proposed reward model. This implementation is simple yet effective, for EQL guarantees the convergence of policy optimization in multi-objective tasks.
To demonstrate the effectiveness of our method, we conduct experiments in benchmark multi-objective reinforcement learning tasks. The results show that our approach achieves performance levels comparable to the oracle method, which uses the ground truth reward function to learn the optimal policy. 
To validate our method's applicability in real-world scenarios, we evaluate our method on \revise{both a multi-energy management task and an autonomous driving task on a multi-line highway}. In both settings, the Pb-MORL algorithm outperforms the oracle, showing its potential for practical implementation in complex, real-world environments.
Through this work, we aim to broaden the applications of MORL in real-world settings, by employing preferences as a more accessible and intuitive optimization guidance.

The main contributions of this paper are as follows:

\begin{itemize}
\item We establish theorems for preference-based optimization in multi-objective settings, demonstrating that a preference-based teacher can guide the learning of optimal multi-objective policies (Theorem \ref{thm:1}, \ref{thm:2}, \ref{thm:4}).
\item We introduce Pb-MORL, which develops an explicit multi-objective reward model that aligns with preference data through the construction of the Bradley-Terry model and the optimization of the cross-entropy loss function. In addition, we combine the EQL algorithm with the reward model to achieve a simple yet effective implementation of Pb-MORL.
\item We conduct experiments in multi-objective benchmark tasks, a multi-energy management task, \revise{and an autonomous driving task on a multi-line highway}, showing that Pb-MORL performs comparably to the oracle method using ground truth reward functions. It demonstrates Pb-MORL's potential for real-world applications.
\end{itemize}

The remaining sections are organized as follows.
In Section \ref{sec:pblm_fmlation}, we introduce preliminaries and the problem formulation.
In Section \ref{sec:method}, we present the theoretical guarantees of Pb-MORL and propose the specific algorithm for explicit reward modeling and policy optimization. 
In Section \ref{sec:experiment}, we describe the experimental setting and discuss the experimental results.
Finally, we conclude the paper in Section \ref{sec:conclusion}.

\section{Problem Formulation} \label{sec:pblm_fmlation}

In this section, we first introduce the multi-objective MDP and Q-learning in multi-objective settings, then formulate the Pb-MORL framework.

\subsection{MDP and Q-learning in Multi-Objective Settings}

For single-objective settings, an MDP with discrete time, infinite-stage discounted reward, and finite or countable state and action spaces could be characterized as a tuple $\mathcal{M} = \langle \mathcal{S}, \mathcal{A}, P, r, \gamma \rangle$.
Here, $\mathcal{S}$ is the state space, $\mathcal{A}$ is the action space, and $P(s'|s, a): \mathcal{S} \times \mathcal{A} \times \mathcal{S} \rightarrow [0, 1]$ is the one-step state transition probability of transiting from $s$ to $s'$ by taking action $a$. Besides, $r(s, a): \mathcal{S} \times \mathcal{A} \rightarrow \mathbb{R}$ defines the immediate reward of taking action $a$ under state $s$, and $\mathbb{R}$ denotes the set of real numbers. Finally, $\gamma \in (0, 1)$ is the discount factor for balancing immediate and long-term rewards.

For multi-objective settings, the MDP framework is extended to include multiple reward functions. The reward function can be represented as a vector $\boldsymbol{r}(s, a): \mathcal{S} \times \mathcal{A} \rightarrow \mathbb{R}^{m}$, where $m$ is the number of objectives. In the case of linear reward combination, the overall reward is defined by a linear combination of these objectives, $r_{\boldsymbol{w}}(s, a) = \boldsymbol{w}^T \boldsymbol{r}(s, a)$, where $\boldsymbol{w} \in \mathcal{W}$ is the weight vector, and the weight space $\mathcal{W}=\{\boldsymbol{w} | \boldsymbol{w}\in \mathbb{R}^{m}, w_i\ge 0, \sum w_i = 1\}$.
The goal in the multi-objective MDP is to find a policy \revise{$\pi(a|s,\boldsymbol{w}): \mathcal{S} \times \mathcal{W} \times \mathcal{A} \rightarrow [0,1]$} that maximizes the inner product of the multi-dimensional discounted return and the weight vector $\boldsymbol{w}$, that is,
\begin{equation}
    \label{eq:PbMORL_objective}
    \max J = \mathbb{E}_{\substack{\boldsymbol{w}\sim D_{\boldsymbol{w}} \\ 
    \tau\sim \left(P, \revise{\pi(\cdot|\cdot, \boldsymbol{w})} \right)}}
    \boldsymbol{w}^T \sum_\tau \gamma^t\boldsymbol{r}(s_t,a_t), 
\end{equation}
where $\tau$ denotes the trajectory, and under $D_{\boldsymbol{w}}$ is a prior weight distribution.
Denote the policy space as $\Pi$. 

Then, the Q-learning algorithm can be adapted to the multi-objective setting.
The standard Q-Learning \cite{Q_learning, RL_sutton} for single-objective RL is based on the Bellman optimality operator $\mathcal{B}$: 
\begin{equation}
    (\mathcal{B}Q)(s,a) := r(s,a) + \sup_{a'}\gamma\mathbb{E}_{s'\sim P(\cdot|s,a)}Q(s',a').
\end{equation}
Following the previous work \cite{EQL}, we extend this to the MORL setting, by considering multi-objective Q-value functions $\boldsymbol{Q}(s, a, \boldsymbol{w}): \mathcal{S}\times \mathcal{A} \times \mathcal{W} \rightarrow \mathbb{R}^{m}$, which estimates expected total vector rewards under state $s$, action $a$ and $m$-dimensional weight $\boldsymbol{w}$. 
\revise{It is important to note that the parameter $\boldsymbol{w}$ in the Q function represents that the Q value is under the policy \revise{$\pi(\cdot | \cdot, \boldsymbol{w})$}, because the policies conditioning on different weight vectors $\boldsymbol{w}$ results in different behaviors, and the corresponding Q functions can vary. }

\revise{We define the distance between two multi-objective Q functions $\boldsymbol{Q}_1, \boldsymbol{Q}_2$ as follows: }
\begin{equation}
    \label{eq:metric}
    d(\boldsymbol{Q}_1, \boldsymbol{Q}_2) := \sup_{s,a,\boldsymbol{w}}
    \left|\boldsymbol{w}^T \left( \boldsymbol{Q}_1(s, a, \boldsymbol{w}) - \boldsymbol{Q}_2(s, a, \boldsymbol{w}) \right)\right|.
\end{equation}
The metric $d$ forms a complete pseudo-metric space, as the identity of indiscernibles \cite{indiscernibles} does not hold. 

With a little abuse of notation, we use the same $\mathcal{B}_\pi$ and $\mathcal{B}$ as in the single-objective RL to represent the Bellman operator in the multi-objective setting. 
Specifically, given a policy $\pi$ and sampled trajectories $\tau$, the multi-objective Bellman operator for policy evaluation $\mathcal{B}_\pi$ is defined as:
\begin{equation}
    \label{eq:B_pi}
    (\mathcal{B}_\pi\boldsymbol{Q})(s,a,\boldsymbol{w}) :=
    \boldsymbol{r}(s,a) + \gamma\mathbb{E}_{\tau\sim (P,\pi)}
    \boldsymbol{Q}(s^{\prime},a^{\prime},\boldsymbol{w}).
\end{equation}
To construct the multi-objective Bellman optimality operator, an optimality filter $\mathcal{H}$ for the multi-objective Q function is first defined as:
\begin{equation}
(\mathcal{H} \boldsymbol{Q})(s, \boldsymbol{w}) := \arg_{\boldsymbol{Q}} 
\sup_{a\in\mathcal{A}, \boldsymbol{w}'\in \mathcal{W}} 
\boldsymbol{w}^T\boldsymbol{Q}(s, a, \boldsymbol{w}'),
\end{equation}
where the $\arg \boldsymbol{Q}$ takes the multi-objective value corresponding to the supremum (i.e., $\boldsymbol{Q}(s, a, \boldsymbol{w}')$ ) such that $(a, \boldsymbol{w}') \in \arg \sup_{a \in\mathcal{A}, \boldsymbol{w}'\in \mathcal{W}} \boldsymbol{w}^T \boldsymbol{Q}(s, a, \boldsymbol{w}')$).
Then, the multi-objective Bellman optimality operator $\mathcal{B}$ is defined as:
\begin{equation}
    \label{eq:B_optimal}
    \begin{aligned}
        (\mathcal{B}\boldsymbol{Q})(s,a,\boldsymbol{w}) := 
        \boldsymbol{r}(s,a) + \gamma 
        \revise{\mathbb{E}_{s'\sim P(\cdot|s,a)} (\mathcal{H} \boldsymbol{Q})(s', \boldsymbol{w})}
    \end{aligned}
\end{equation}
Intuitively, the optimality Bellman operator $\mathcal{B}$ solves the minimum convex envelope of the current $\boldsymbol{Q}$ frontier.
Previous works of MORL \cite{EQL} have provided proof of the convergence of the above multi-objective Q-learning algorithm, by proving the Bellman operator $\mathcal{B}_\pi$ and $\mathcal{B}$ are both contrastive mappings under the metric $d$ defined in Eq. \eqref{eq:metric}.

\subsection{Pb-MORL Formulation}
\label{subsec:formulation}

For single-objective settings, by following the previous work \cite{pbrl_basic2012,pbrl_basic2017}, we can define the preference in the form of tuple $(\sigma_0, \sigma_1, p)$, where segment $\sigma_0, \sigma_1$ are sequences of states and actions $\{s_k, a_k, ..., s_{k+H-1}, a_{k+H-1}\}$ with length $H$ and arbitrary starting time $k$, 
\revise{and $p\in \{0, 0.5, 1\}$ encodes the preference relations:
\begin{itemize}
    \item $\sigma_0$ strictly preferred to $\sigma_1$ when $p=0$.
    \item $\sigma_1$ strictly preferred to $\sigma_0$ when $p=1$. 
    \item Indeterminate preference (equivalence or ambiguous judgment) when $p=0.5$.
\end{itemize}
This scheme accounts for human rating uncertainty while maintaining annotation efficiency. When $\sigma_0 = \sigma_1$ or trajectories are equally preferable, $p=0.5$ explicitly captures the uncertainty. 
}

For multi-objective settings, we redefine the preference as a tuple $(\sigma_0, \sigma_1, \boldsymbol{w}, p)$, where $\boldsymbol{w}\in \mathcal{W}$ is a weight vector. 
\revise{The preference $p\in \{0,0.5,1\}$ is a scalar which encodes preference relations under $\boldsymbol{w}$, defined similarly as in the single-objective settings.}
In fact, given any weight vector, we introduce a complete ordering of the trajectory segments. However, we employ a pairwise comparison method due to practical constraints and use partial ordering notation ($\succ$) in the following paper.
Specifically, let $\sigma_0 \succ_{\boldsymbol{w}} \sigma_1$ means that trajectory segment $\sigma_0$ is preferred over $\sigma_1$ under the weight vector $\boldsymbol{w}$.
Then, the preference $p$ can be written in the form of $p=\mathbb{I}(\sigma_0 \succ_{\boldsymbol{w}} \sigma_1)$, where $\mathbb{I}(\cdot)$ is an indicator function that returns $1$ if the condition is true, and $0$ otherwise. 
As mentioned earlier, the weight $\boldsymbol{w}$ represents the importance assigned to each objective within the multi-objective framework. 
The weight $\boldsymbol{w}$ is crucial in defining the multi-objective preference, as preferences can vary for the same trajectory pair depending on the weights.

\begin{algorithm}[t]
\caption{Using the teacher to derive convex Pareto frontier, based on traversing the weight space}
\label{alg:1}
\begin{algorithmic}[1]
\STATE Initialize the solution set $\Pi^*=\emptyset$
\FOR{each $\boldsymbol{w}\in \mathcal{W}^{[N_w]}$}
    \FOR{each $\pi_i\in\Pi$}
        \IF{$\nexists ~\pi'\in \Pi, \pi'\neq\pi_i$ s.t. 
        $\boldsymbol{w}^T \sum_{(s,a)\sim\sigma_i} \gamma^t \boldsymbol{r}(s_t,a_t) < 
        \boldsymbol{w}^T \sum_{(s,a)\sim\sigma'} \gamma^t \boldsymbol{r}(s_t,a_t)$, where $\sigma_i, \sigma'$ are segments generated by $\pi_i, \pi'$, }
            \STATE $\Pi^*\gets\Pi^* \cup\{\pi_i\}$
        \ENDIF
    \ENDFOR
\ENDFOR
\RETURN $\Pi^*$
\end{algorithmic}
\end{algorithm}

To align the problem formulation with the RL framework, we define an explicit multi-objective reward model $\hat{\boldsymbol{r}}: \mathcal{S}\times \mathcal{A} \rightarrow \mathbb{R}^{m}$, where each dimension corresponds to a distinct objective. 
This reward model can be trained using preference data, serving as a bridge between qualitative preference and quantitative rewards.
Based on this model, we propose the objective of Pb-MORL as finding a policy \revise{$\pi(a|s,\boldsymbol{w})$} conditioned on the weight vector $\boldsymbol w$. Specifically, the goal is to maximize the inner product between the conditioned weight and the discounted return of the reward model $\hat{\boldsymbol{r}}$, under a prior weight distribution $D_{\boldsymbol{w}}$, as Eq. \eqref{eq:PbMORL_objective} shows:
\begin{equation}
    \max J = \mathbb{E}_{\substack{\boldsymbol{w}\sim D_{\boldsymbol{w}} \\ 
    \tau\sim \left(P, \revise{\pi(\cdot|\cdot, \boldsymbol{w})} \right)}}
    \boldsymbol{w}^T \sum_\tau \gamma^t\hat{\boldsymbol{r}}(s_t,a_t).
\end{equation}
In the form of Q function, it can also be written as:
\begin{equation}
    \max J = \mathbb{E}_{\substack{\boldsymbol{w}\sim D_{\boldsymbol{w}} \\ 
    (s_0,a_0) \sim \left(P, \revise{\pi(\cdot|\cdot, \boldsymbol{w}}) \right)}}
    \boldsymbol{w}^T \boldsymbol{Q}_\pi
    (s_0,a_0,\boldsymbol{w}|\hat{\boldsymbol{r}}).
\end{equation}

\section{A Pb-MORL algorithm with Explicit Reward Modeling} 
\label{sec:method}

\subsection{Theoretical Analysis}

In this subsection, we present the theoretical foundations of the Pb-MORL framework. 
We demonstrate how our approach ensures convergence to Pareto-optimal policies.
To ease the proof, we discretize the weight space $\mathcal{W}$ to a finite space $\mathcal{W}^{[N_w]}$ with size $N_w$, 
and assume that when $N_w$ is large enough, $\mathcal{W}^{[N_w]}$ could fully represent $\mathcal{W}$, then induce the same set of optimal policies.
We formalize this in Assumption \ref{asp:discrete_w}.

First, we assume the presence of preferences over pairs of trajectory segments with arbitrary finite length under an arbitrary given weight.
To formalize this, we introduce the following assumption.

\begin{assumption} \label{asp:1}
The preference $p\in \{0,0.5,1\}$ over a pair of trajectory segments $(\sigma_0, \sigma_1)$ exists, with arbitrary finite segment length $H$, under an arbitrary given weight $\boldsymbol{w}\in W$. 
These preferences satisfy \textbf{symmetry}, \textbf{consistency}, and \textbf{transitivity}, which are defined as follows.
\end{assumption}

\begin{definition}[Symmetry]  
Symmetry means that if trajectory segment $\sigma_0$ is preferred over $\sigma_1$ under a weight vector $\boldsymbol{w}$, then the opposite must also be true: $\sigma_1$ is less preferred than $\sigma_0$ under the same weight. Formally, this is written as:
\begin{equation}
\sigma_0 \succ_{\boldsymbol{w}} \sigma_1 \implies \sigma_1 \prec_{\boldsymbol{w}} \sigma_0 .
\end{equation}
This ensures that preferences are reversible under the same weight vector.
\end{definition}

\begin{definition}[Consistency]
Consistency means that if $\sigma_0 \succ_{\boldsymbol{w}} \sigma_1$ holds for a given $\boldsymbol{w}$, this preference remains unchanged over time.
Formally, this is expressed as: 
\begin{equation}
\sigma_0^{t_0} \succ_{\boldsymbol{w}} \sigma_1^{t_0} \implies 
\forall t > 0, \ \sigma_0^{t} \succ_{\boldsymbol{w}} \sigma_1^{t} ,
\end{equation}
where $\sigma^{t}$ denotes a trajectory segment starting from time $t$, i.e. $\sigma^{t} = \{s_{t},a_{t},\cdots,s_{t+H-1}, a_{t+H-1}\}$. Here, $\sigma^{t_0}$ and $\sigma^{t}$ are segments with the same state action sequence $(s,a,s',\cdots)$, but starting from the different time.
\end{definition}

\begin{definition}[Transitivity]
Transitivity means that if the teacher prefers $\sigma_0$ over $\sigma_1$ and $\sigma_1$ over $\sigma_2$ under the same weight $\boldsymbol{w}$, then the teacher must also prefer $\sigma_0$ over $\sigma_2$ under weight $\boldsymbol{w}$. Formally, this is expressed as:
\begin{equation}
(\sigma_0 \succ_{\boldsymbol{w}} \sigma_1) \land (\sigma_1 \succ_{\boldsymbol{w}} \sigma_2) \implies \sigma_0 \succ_{\boldsymbol{w}} \sigma_2 .
\end{equation}
This property ensures logical coherence of preferences across multiple trajectory segments. Thus, the teacher's feedback does not contradict itself when extended to multiple comparisons.
\end{definition}

\revise{The symmetry, consistency, and transitivity requirements in Assumption \ref{asp:1} align with standard preference modeling in single-objective RL \cite{pebble}.} 
Then, we assume the presence of a perfect teacher, which can provide the preference over an arbitrary pair of trajectory segments with arbitrary finite length under an arbitrarily given weight. 

\begin{assumption} \label{asp:teacher}
We assume the existence of a teacher who can provide the preference feedback for two arbitrary trajectory segments $(\sigma_0, \sigma_1)$, based on an arbitrary weight vector $\boldsymbol{w}$.    
\end{assumption}

In Assumption \ref{asp:1} and \ref{asp:teacher}, we assume that the teacher can provide preferences $p \in \{0, 0.5, 1\}$ over arbitrary pairs of segments $(\sigma_0, \sigma_1)$ under a given weight $\boldsymbol{w}$, and that these preferences satisfy symmetry, consistency, and transitivity. 
\revise{The assumption of preference availability under given weights is based on existing single-objective preference learning works \cite{pebble, surf}. }
This indicates that the teacher's preferences are based on stable and consistent feedback related to the task objectives.
Based on Assumption \ref{asp:1}, it is reasonable to assume that the task has an underlying true reward, which is aligned with the teacher's preferences. We formalize it in Assumption \ref{asp:2}.
This assumption helps to establish a connection between the teacher's preferences and policy optimization.

\begin{assumption} \label{asp:2}
There exists a true reward function $\boldsymbol{r}$ for a certain multi-objective task, if there exists a teacher that can express preferences for this task. 
Furthermore, the value of the true weighted reward $\boldsymbol{w}^T \boldsymbol{r}$ is bounded by a constant $r_\text{max}$. Formally, this is written as:
\begin{equation}
\max_{\boldsymbol{w},s,a} |\boldsymbol{w}^T \boldsymbol{r}(s,a)| \le r_\text{max}.
\end{equation}
The above equation indicates that regardless of the chosen weight vector $\boldsymbol{w}$, the absolute value of the weighted reward will not exceed this predefined upper limit.
\end{assumption}

\revise{Assumption \ref{asp:2} is a common practice in existing works \cite{melo2001convergence, li2021decentralized, li2024anocbabased}, as most real-world problems involve bounded rewards.} 
By doing this, Assumption \ref{asp:2} prevents issues such as divergence in the reward function, thereby enabling Theorem \ref{thm:1}, as discussed in the following paper.

\begin{assumption} \label{asp:discrete_w}
    The optimal policy $\pi^*(a|s,\boldsymbol{w}_0)$ under weight $\boldsymbol{w}_0$ is also the optimal policy under weight $\boldsymbol{w}\in\{\boldsymbol{w}| \|\boldsymbol{w}-\boldsymbol{w}_0\|_\infty\le \epsilon\}$, $\exists \epsilon>0,\forall s\in\mathcal{S},a\in\mathcal{A},\boldsymbol{w}_0\in\mathcal{W}$.
\end{assumption}

\revise{Assumption \ref{asp:discrete_w} is based on the assumption that the value function is continuous with respect to the weight vector $\boldsymbol{w}$, which is reasonable and commonplace in industrial applications. }
With Assumption \ref{asp:discrete_w}, we could discretize the weight space $\mathcal{W}$ into a finite space $\mathcal{W}^{[N_w]}$ of size $N_w = \frac{|\mathcal{W}|}{\epsilon^m} \le \epsilon^{-m}$, i.e. divide the weight space $\mathcal{W}$ to super cubes with side length $\epsilon$. 
The optimal policies for weights within each super cube are identical. Therefore, $\mathcal{W}^{[N_w]}$ could fully represent $\mathcal{W}$, as they induce the same set of optimal policies.

Under Assumption \ref{asp:1}, \ref{asp:teacher}, \ref{asp:2} and \ref{asp:discrete_w}, in the following theorems, we illustrate that the entire Pareto frontier could be obtained by a simple algorithm (Algorithm \ref{alg:1}) using preferences given different weights.
Specifically, we first prove that any optimal policy in an arbitrary given weight is in the Pareto frontier in Theorem \ref{thm:1}.
Then in Theorem \ref{thm:2}, we prove that the optimal policies in all weights could form any convex Pareto frontier.
Further, for non-convex Pareto frontiers, we prove the frontier could be obtained using preferences collected in designed weights in Theorem \ref{thm:3}.

\begin{theorem} \label{thm:1}
    Each policy in the policy set $\pi^*(a|s,\boldsymbol{w})\in\Pi^*$ derived from Algorithm \ref{alg:1} is in the Pareto frontier when the segment length $H\rightarrow\infty$.
\end{theorem}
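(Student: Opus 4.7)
The plan is to prove Theorem \ref{thm:1} by contradiction. I would suppose a policy $\pi^*$ has been added to $\Pi^*$ by Algorithm \ref{alg:1} yet does not lie on the Pareto frontier, and then derive a violation of the inclusion test executed inside the algorithm. The inclusion test certifies that there exists some $\boldsymbol{w}\in\mathcal{W}^{[N_w]}$ such that no alternative $\pi'\in\Pi$ produces a segment with strictly larger scalarized return $\boldsymbol{w}^T\sum_t\gamma^t\boldsymbol{r}(s_t,a_t)$. In other words, $\pi^*$ is a maximizer of the scalarized objective $\boldsymbol{w}^T J(\pi)$ where $J(\pi)$ denotes the vector of expected discounted returns.

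First I would justify passing from the finite-horizon segment sums used inside Algorithm \ref{alg:1} to the true infinite-horizon returns. Assumption \ref{asp:2} gives $|\boldsymbol{w}^T\boldsymbol{r}(s,a)|\le r_{\max}$, so together with $\gamma\in(0,1)$ the tail $\sum_{t\ge H}\gamma^t\boldsymbol{w}^T\boldsymbol{r}(s_t,a_t)$ is bounded in absolute value by $\gamma^H r_{\max}/(1-\gamma)$, which vanishes as $H\to\infty$. The consistency property from Assumption \ref{asp:1} lets me fix the starting time $k=0$ without loss of generality, so segment comparisons become comparisons of infinite-horizon scalarized returns. A stochastic-policy technicality is handled by replacing a single sampled segment by its trajectory expectation under $(P,\pi)$, which is the quantity that Algorithm \ref{alg:1} effectively compares in the $H\to\infty$ limit.

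Next I would apply the standard linear-scalarization argument. If $\pi^*$ were dominated, there would exist $\pi'$ with $J_i(\pi')\ge J_i(\pi^*)$ for every coordinate $i$ and with strict inequality at some coordinate $j$. Since $\boldsymbol{w}\in\mathcal{W}$ has non-negative entries summing to one, it follows that
\begin{equation}
\boldsymbol{w}^T J(\pi') \;\ge\; \boldsymbol{w}^T J(\pi^*),
\end{equation}
with strict inequality whenever $w_j>0$. In that strict case the inclusion test inside Algorithm \ref{alg:1} would have rejected $\pi^*$, which is the desired contradiction.

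The main obstacle is the boundary case $w_j=0$ on the coordinate where $\pi'$ strictly improves: then the scalarized returns only tie, so the inclusion test does not directly exclude $\pi^*$ and the bare linear-scalarization argument only yields weak Pareto optimality. I would close this gap by invoking Assumption \ref{asp:discrete_w}: the grid $\mathcal{W}^{[N_w]}$ shares an open $\epsilon$-super-cube with a weight $\boldsymbol{w}'$ having $w'_j>0$ and inducing the same optimal policy as $\boldsymbol{w}$, so the strict-inequality contradiction can be transferred to $\boldsymbol{w}'$. A lighter-weight alternative, which is common in the MORL literature, is to interpret ``Pareto frontier'' in the weak sense and stop after the non-strict inequality; I would mention both routes and select the one that matches the authors' later usage in Theorem \ref{thm:2} and Theorem \ref{thm:4}.
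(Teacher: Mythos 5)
Your proposal is correct and follows essentially the same route as the paper's proof: a contradiction argument that combines the geometric tail bound $\gamma^H r_{\max}/(1-\gamma)$ from Assumption \ref{asp:2} (to pass from finite segments to infinite-horizon returns) with the standard linear-scalarization fact that a dominated policy cannot maximize $\boldsymbol{w}^T J$. Where you go beyond the paper is in two places where its proof is loose: you derive the contradiction at the \emph{certificate} weight under which $\pi^*$ was admitted to $\Pi^*$, whereas the paper derives it at a freshly introduced weight $\boldsymbol{w}_0$ even though Algorithm \ref{alg:1} only requires $\pi^*$ to be unbeaten under \emph{some} grid weight; and you explicitly flag the boundary case $w_j=0$, where linear scalarization yields only a tie and hence only weak Pareto optimality, a case the paper's proof silently skips even though $\mathcal{W}$ includes the simplex boundary. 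Your patch via Assumption \ref{asp:discrete_w} (transferring to a nearby weight with $w'_j>0$) or the weak-Pareto reading are both reasonable ways to close that gap, and either would strengthen the published argument.
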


\begin{proof}
    We prove this theorem by contradiction. 
    Suppose  $\pi^*(a|s,\boldsymbol{w}) $ is not in the Pareto frontier. Then there must exist a policy  $\pi^{\circ}(a|s,\boldsymbol{w}) \neq \pi^*(a|s,\boldsymbol{w}) $ which dominates  $\pi^*(a|s,\boldsymbol{w}) $.
    And then there must exist a weight  $\boldsymbol{w}_0 $ and a pair of trajectories  $\tau^{\circ} $ and  $\tau^* $ which are generated from  $\pi^{\circ}(a|s,\boldsymbol{w}) $ and  $\pi^*(a|s,\boldsymbol{w}) $ respectively, and  $\tau^{\circ} \succ_{\boldsymbol{w}_0} \tau^* $.
    We extract segments of length  $H $ from  $\tau^{\circ} $ and  $\tau^* $, denoted as  $\sigma^{\circ} $ and  $\sigma^* $ respectively. 
    Under Assumption \ref{asp:1}, the teacher can always output the true preference between two segments.
    
    Let $s_t^{\square}$ and $a_t^{\square}$ denote the state and action at time  $t$ in the trajectory $\tau^{\square}$, where $\square$ is an arbitrary symbol. 
    With discount factor $\gamma $, the difference between the discounted total return  $\sum_{t=0}^\infty \gamma^t \boldsymbol{w}_0^T\boldsymbol{r}(s_t,a_t) $ and the truncated discounted total return $\sum_{t=0}^{H-1} \gamma^t \boldsymbol{w}_0^T\boldsymbol{r}(s_t,a_t) $ is bounded, i.e. 
    $|\sum_{t=H}^\infty \gamma^t \boldsymbol{w}_0^T\boldsymbol{r}(s_t,a_t)| \le \frac{\gamma ^H}{1-\gamma}r_\text{max}$.
    Let $\mathcal{R}_{\underline{t}}^{\bar{t}}(\sigma^\circ)=\sum_{t=\underline{t}}^{\bar{t}} \gamma^t \boldsymbol{w}_0^T\boldsymbol{r}(s_t^\circ,a_t^\circ)$, $\mathcal{R}_{\underline{t}}^{\bar{t}}(\sigma^*)=\sum_{t=\underline{t}}^{\bar{t}} \gamma^t \boldsymbol{w}_0^T\boldsymbol{r}(s_t^*,a_t^*)$, we have
    \begin{align*}
        & \sum_{t=0}^{H-1} \gamma^t \boldsymbol{w}_0^T\boldsymbol{r}(s_t^{\circ},a_t^{\circ}) -\sum_{t=0}^{H-1} \gamma^t \boldsymbol{w}_0^T\boldsymbol{r}(s_t^{*},a_t^{*}) \ge 2\frac{\gamma ^H}{1-\gamma}r_\text{max} \\
        \Leftrightarrow
        & \mathcal{R}_0^{H-1}(\sigma^{\circ})-\mathcal{R}_0^{H-1}(\sigma^{*})\ge 2\frac{\gamma ^H}{1-\gamma}r_\text{max} \\
        & ~~ \ge |\mathcal{R}_H^{\infty}(\sigma^{\circ})|+|\mathcal{R}_H^{\infty}(\sigma^{*})|
        \ge \mathcal{R}_H^{\infty}(\sigma^{*})-\mathcal{R}_H^{\infty}(\sigma^{\circ}) \\
        \Rightarrow
        & \mathcal{R}_0^{\infty}(\sigma^{\circ})-\mathcal{R}_0^{\infty}(\sigma^{*})\ge 0\\
        \Leftrightarrow
        &\sum_{t=0}^\infty \gamma^t \boldsymbol{w}_0^T\boldsymbol{r}(s_t^{\circ},a_t^{\circ})-\sum_{t=0}^\infty \gamma^t \boldsymbol{w}_0^T\boldsymbol{r}(s_t^{*},a_t^{*}) \ge 0.
    \end{align*}
    Therefore, a sufficient condition that the preference between two trajectories is consistent with the preference between the two segments is that $|\sum_{t=0}^{H-1} \gamma^t \boldsymbol{w}_0^T\boldsymbol{r}(s_t^{\circ},a_t^{\circ})-\sum_{t=0}^{H-1} \gamma^t \boldsymbol{w}_0^T\boldsymbol{r}(s_t^{*},a_t^{*})|\ge 2\frac{\gamma ^H}{1-\gamma}r_\text{max}$.
    This condition can always be satisfied when $H \rightarrow \infty$, 
    which means the true preference between two trajectories ( $\tau^{\circ} \succ_{\boldsymbol{w}_0} \tau^* $) can always be obtained from the teacher.
    That contradicts Algorithm \ref{alg:1} which only terminates when  $\nexists \pi^{\circ} $ s.t.  $\pi^{\circ} \succ_{\boldsymbol{w}_0} \pi^* $ and completes the proof.
\end{proof}

In practice, we typically select pairs of segments with distinct behaviors for human comparison, facilitating humans to provide preferences. 
Therefore, it is reasonable to assume that there exists a minimum difference $\delta$ in discounted returns between any two segments, that is, $\exists \delta \ge 0$ such that $|\sum_{t=0}^{H-1} \gamma^t \boldsymbol{w}_0^T \boldsymbol{r}_1(s_t, a_t) - \sum_{t=0}^{H-1} \gamma^t \boldsymbol{w}_0^T \boldsymbol{r}_2(s_t, a_t)| \ge \delta > 0$. 
Under this assumption, we derive the following Corollary \ref{cor:1}.

\begin{corollary} \label{cor:1}
    If all segment pairs are distinct enough, i.e. $\exists \delta\ge0$ s.t. \revise{$|\sum_{t=0}^{H-1} \gamma^t \boldsymbol{w}_0^T\boldsymbol{r}(s_t^1,a_t^1) -\sum_{t=0}^{H-1} \gamma^t \boldsymbol{w}_0^T\boldsymbol{r}(s_t^2,a_t^2)| \ge \delta > 0$ $\forall \sigma_1, \sigma_2$}, 
    then each policy in the policy set $\pi^*(a|s,\boldsymbol{w})\in\Pi^*$ derived from Algorithm \ref{alg:1} is in the Pareto frontier when the segment length $H\ge\log_\gamma\frac{\delta(1-\gamma)}{2r_\text{max}}$.
\end{corollary}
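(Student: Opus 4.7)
The plan is to proceed by contradiction, mirroring the argument used in the proof of Theorem \ref{thm:1} but replacing its asymptotic tail bound with a quantitative one that the distinctness hypothesis $\delta>0$ makes available. Assume for contradiction that some $\pi^*(a|s,\boldsymbol{w})\in\Pi^*$ returned by Algorithm \ref{alg:1} is not Pareto optimal. Then, exactly as in Theorem \ref{thm:1}, there exist a dominating policy $\pi^{\circ}$, a weight $\boldsymbol{w}_0$, and trajectories $\tau^{\circ},\tau^{*}$ generated by $\pi^{\circ},\pi^{*}$ respectively such that $\tau^{\circ}\succ_{\boldsymbol{w}_0}\tau^{*}$. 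Truncate these to length-$H$ segments $\sigma^{\circ},\sigma^{*}$.

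The key observation in the proof of Theorem \ref{thm:1} is already quantitative: provided the absolute difference of the truncated weighted returns of the two segments is at least $2\gamma^{H}r_\text{max}/(1-\gamma)$, the tail bound $|\mathcal{R}_H^{\infty}(\cdot)|\le \gamma^{H}r_\text{max}/(1-\gamma)$ derived from Assumption \ref{asp:2} forces the preference on the segments to coincide with the preference on the underlying trajectories, yielding $\sigma^{\circ}\succ_{\boldsymbol{w}_0}\sigma^{*}$. Under the distinctness hypothesis this truncated gap is always at least $\delta$, so a sufficient condition for the segment-level preference to match reduces to
\begin{equation*}
\delta \;\ge\; \frac{2\gamma^{H}}{1-\gamma}\,r_\text{max}
\quad\Longleftrightarrow\quad
\gamma^{H} \;\le\; \frac{\delta(1-\gamma)}{2r_\text{max}}.
\end{equation*}
Because $\gamma\in(0,1)$, the map $x\mapsto \log_\gamma x$ is strictly decreasing, so this inequality rearranges exactly to $H\ge \log_\gamma\frac{\delta(1-\gamma)}{2r_\text{max}}$, which is the hypothesis of the corollary.

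Once the truncated-return gap dominates the tail, Assumption \ref{asp:teacher} guarantees the teacher reports $\sigma^{\circ}\succ_{\boldsymbol{w}_0}\sigma^{*}$, and by the consistency and transitivity clauses of Assumption \ref{asp:1} this lifts to the dominance test evaluated in the inner loop of Algorithm \ref{alg:1}; hence $\pi^{*}$ would not have survived and could not belong to $\Pi^{*}$, a contradiction. I do not anticipate a real obstacle here, since no new analytical machinery is needed beyond the quantitative tail bound already appearing in the proof of Theorem \ref{thm:1}; the only subtlety is the sign flip when inverting $\gamma^{H}\le c$ through $\log_\gamma$, which is automatic given $\gamma<1$.
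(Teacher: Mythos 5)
Your proposal is correct and follows essentially the same route as the paper: the corollary is obtained by reusing the contradiction argument and the quantitative tail bound $|\mathcal{R}_H^{\infty}(\cdot)|\le \gamma^{H}r_\text{max}/(1-\gamma)$ from the proof of Theorem \ref{thm:1}, noting that the distinctness gap $\delta$ satisfies the sufficient condition $\delta\ge 2\gamma^{H}r_\text{max}/(1-\gamma)$ precisely when $H\ge\log_\gamma\frac{\delta(1-\gamma)}{2r_\text{max}}$. The inversion through the decreasing map $\log_\gamma$ is handled correctly, so nothing further is needed.
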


\begin{theorem} \label{thm:2}
    Algorithm \ref{alg:1} obtains the entire convex Pareto frontier, i.e., $\Pi^*$ is the entire convex Pareto frontier.
\end{theorem}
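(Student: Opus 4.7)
The plan is to prove the theorem by establishing both inclusions. The direction $\Pi^*\subseteq\text{convex Pareto frontier}$ is essentially Theorem \ref{thm:1} (any element of $\Pi^*$ lies on the Pareto frontier, and since it maximizes a linear scalarization $\boldsymbol{w}^T\sum\gamma^t\boldsymbol{r}$ with $\boldsymbol{w}\in\mathcal{W}$, it lies on the convex portion). The substantive direction is the converse: every policy on the convex Pareto frontier is recovered by Algorithm \ref{alg:1}.

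First I would set up the value set $V=\{\mathbb{E}_{\tau\sim(P,\pi)}\sum_t \gamma^t \boldsymbol{r}(s_t,a_t):\pi\in\Pi\}\subset\mathbb{R}^m$, together with its downward closure $V^{\downarrow}=V-\mathbb{R}_{\ge 0}^m$. By construction, the convex Pareto frontier of $V$ coincides with the Pareto-efficient boundary of $\mathrm{conv}(V^{\downarrow})$. For any $\boldsymbol{v}^*$ on this boundary with associated policy $\pi^*$, the supporting hyperplane theorem applied to the convex set $\mathrm{conv}(V^{\downarrow})$ at the boundary point $\boldsymbol{v}^*$ yields a non-zero vector $\boldsymbol{w}^*\in\mathbb{R}^m$ with $\boldsymbol{w}^{*T}\boldsymbol{v}^*\ge \boldsymbol{w}^{*T}\boldsymbol{v}$ for all $\boldsymbol{v}\in V$. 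The downward closure forces $\boldsymbol{w}^*\ge 0$ componentwise (otherwise the supremum on the half-space would be $+\infty$), so after normalization we obtain $\boldsymbol{w}^*\in\mathcal{W}$ such that $\pi^*\in\arg\max_{\pi\in\Pi}\boldsymbol{w}^{*T}\mathbb{E}\sum_t\gamma^t\boldsymbol{r}$.

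Next I would transfer this continuous-weight optimum to the discretized grid. By Assumption \ref{asp:discrete_w}, there is an $\epsilon$-neighborhood of $\boldsymbol{w}^*$ on which $\pi^*$ remains optimal; since $\mathcal{W}^{[N_w]}$ was constructed by tiling $\mathcal{W}$ with super-cubes of side length $\epsilon$, at least one $\boldsymbol{w}\in\mathcal{W}^{[N_w]}$ lies in this neighborhood and preserves the optimality of $\pi^*$. When Algorithm \ref{alg:1} iterates over this $\boldsymbol{w}$, the non-domination test at line 4 succeeds for $\pi^*$, because the argument inside the proof of Theorem \ref{thm:1} shows the teacher's segment-level preferences (for sufficiently long $H$) agree with the true trajectory-level scalarized return ordering. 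Hence $\pi^*\in\Pi^*$.

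The main obstacle I anticipate is the supporting hyperplane step: I must justify that the supporting direction can be chosen inside the probability simplex $\mathcal{W}$, not merely in $\mathbb{R}^m$. The monotone closure trick (replacing $V$ by $V^{\downarrow}$) handles the sign, and the non-zero requirement together with normalization gives the simplex constraint; a minor subtlety is that if $V$ is not closed (e.g.\ because the supremum over $\Pi$ is not attained) one may need to work with its closure and then argue $\pi^*$ still realizes the supporting value, which is routine in the standard MORL analysis. A secondary care point is non-uniqueness: Algorithm \ref{alg:1} could retain multiple equally optimal policies under a given $\boldsymbol{w}$, but this is consistent with the frontier set-inclusion statement and does not alter the argument.
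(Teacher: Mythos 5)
Your proof is correct and follows essentially the same route as the paper's: convexity of the frontier guarantees, for each Pareto-optimal policy, a supporting weight vector in $\mathcal{W}$ under which that policy maximizes the scalarized return, and Theorem~\ref{thm:1} then shows Algorithm~\ref{alg:1} retains that policy at that weight. The paper merely asserts the existence of the supporting weight and says ``by traversing $\boldsymbol{w}$''; your supporting-hyperplane/downward-closure argument (which yields nonnegativity and the simplex normalization) and the transfer to the grid $\mathcal{W}^{[N_w]}$ via Assumption~\ref{asp:discrete_w} supply the details the paper leaves implicit.
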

\begin{proof}
    Since the Pareto frontier is convex, for each policy $\pi^*$ on the Pareto frontier, there must exist a weight $\boldsymbol{w}$ s.t. $\boldsymbol{w}^T\bar{\boldsymbol{R}}^*\ge \boldsymbol{w}^T\bar{\boldsymbol{R}}'$, where $\bar{\boldsymbol{R}}^*=\mathbb{E}_\pi\sum_{t=0}^\infty\gamma^t\boldsymbol{r}(s_t,a_t)$ is the expected total discounted return derived by $\pi^*$, and $\bar{\boldsymbol{R}}'$ is that derived by any other policies.
    Using Theorem \ref{thm:1}, the optimal policy under weight $\boldsymbol{w}$ could be obtained by Algorithm \ref{alg:1}.
    Therefore, by traversing $\boldsymbol{w}$, we can traverse each policy on the Pareto frontier.
\end{proof}

\begin{algorithm}[t]
\caption{Using the teacher to obtain non-convex Pareto frontier, based on insertion sort}
\label{alg:2}
\begin{algorithmic}[1]
\FOR{each policy $\pi_i\in\Pi$}
    \FOR{each policy $\pi_j\in\Pi$}
        \IF{for each $\boldsymbol{w}_k\in W_I$, 
        $\boldsymbol{w}_k^T \sum_{(s,a)\sim \sigma_i} 
        \gamma^t\boldsymbol{r}(s_t,a_t) > 
        \boldsymbol{w}_k^T \sum_{(s,a)\sim \sigma_j}
        \gamma^t\boldsymbol{r}(s_t,a_t)$, where $\sigma_i, \sigma_j$ are segments generated by $\pi_i,\pi_j$, }
            \STATE Assign $\pi_i>\pi_j$
        \ENDIF
    \ENDFOR
\ENDFOR
\STATE Use insertion sort, obtain one or multiple biggest policies
\end{algorithmic}
\end{algorithm}

\begin{theorem} \label{thm:3}
    An arbitrary Pareto frontier could be completely obtained
    with preferences under every weight from an identity matrix weight set $W_I=\{\boldsymbol{w}_i~|~ [\boldsymbol{w}_i,\cdots,\boldsymbol{w}_m] = I, i=1,\cdots,m\}$.
\end{theorem}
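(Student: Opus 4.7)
The plan is to reduce Algorithm \ref{alg:2} to a component-wise comparison of policy return vectors, and then show that the surviving (maximal) policies are exactly the Pareto optimal ones, without requiring the convexity hypothesis needed in Theorem \ref{thm:2}.

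First, I would exploit the structure of $W_I$: each $\boldsymbol{w}_k \in W_I$ equals the standard basis vector $e_k$, so $\boldsymbol{w}_k^T \boldsymbol{r}(s,a) = r_k(s,a)$. By the same tail-bound argument used in Theorem \ref{thm:1} (controlling $|\sum_{t=H}^\infty \gamma^t \boldsymbol{w}_k^T \boldsymbol{r}|$ by $\frac{\gamma^H}{1-\gamma} r_\text{max}$), the teacher's preference $\sigma^\circ \succ_{e_k} \sigma^*$ on sufficiently long segments (either $H \to \infty$, or $H \ge \log_\gamma \frac{\delta(1-\gamma)}{2 r_\text{max}}$ under Corollary \ref{cor:1}) faithfully reports whether $\bar{R}_k(\pi^\circ) > \bar{R}_k(\pi^*)$, where $\bar{R}_k(\pi)$ is the expected discounted return of $\pi$ on the $k$-th objective. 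Querying preferences under $W_I$ therefore yields an oracle for strict coordinate-wise comparison of policy return vectors.

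Second, I would identify the order defined in Algorithm \ref{alg:2}. The assigned relation ``$\pi_i > \pi_j$'' requires $\boldsymbol{w}_k^T \bar{\boldsymbol{R}}(\pi_i) > \boldsymbol{w}_k^T \bar{\boldsymbol{R}}(\pi_j)$ for every $\boldsymbol{w}_k \in W_I$, which under the identity weight set collapses to strict coordinate-wise dominance $\bar{R}_k(\pi_i) > \bar{R}_k(\pi_j)$ for all $k$. The insertion-sort step then extracts the maximal elements, i.e., those policies that are not strictly coordinate-wise dominated by any other policy in $\Pi$. Because this pairwise comparison never takes a convex combination of objectives, the argument applies verbatim to non-convex Pareto frontiers, which is the precise strengthening over Theorem \ref{thm:2}. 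I would then close the proof with two inclusions: every Pareto optimal $\pi^*$ survives, since no policy can dominate it strictly on all coordinates without also Pareto-dominating it; and every non-Pareto-optimal policy is eliminated by its Pareto dominator.

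The main obstacle is the gap between weak and strict coordinate-wise dominance: standard Pareto dominance requires weak inequalities on all coordinates with at least one strict, while Algorithm \ref{alg:2} uses strict inequalities throughout. I would bridge this either by invoking the minimum-gap condition $\delta > 0$ from Corollary \ref{cor:1} to rule out exact coordinate ties among distinct policies, or by letting the teacher return $p = 0.5$ on tied coordinates and interpreting such ties as absence of strict dominance. Antisymmetry and transitivity of the resulting order then follow from the symmetry and transitivity axioms in Assumption \ref{asp:1} applied coordinate by coordinate, which is what makes the insertion sort well-defined and guarantees that its output is exactly the Pareto frontier.
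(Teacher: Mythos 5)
Your proposal follows essentially the same route as the paper: exploit the identity weight set to turn preference queries into coordinate-wise comparisons of return vectors, and show that the policies surviving the pairwise elimination in Algorithm~\ref{alg:2} are exactly the Pareto frontier. In fact you are more careful than the paper's own proof on two points it leaves implicit --- the tail-bound argument ensuring segment-level preferences faithfully reflect policy-level returns, and the mismatch between the algorithm's all-coordinates-strict dominance and standard (weak-with-one-strict) Pareto dominance --- the latter of which you correctly bridge via the $\delta>0$ separation of Corollary~\ref{cor:1} or by treating $p=0.5$ ties as absence of strict dominance.
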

\begin{proof}
    We prove it by providing a constructive Algorithm \ref{alg:2}.

    If there is only one policy in the policy space $\Pi$, then it is the Pareto frontier.

    If we add a policy $\pi'$ into the current policy space $\Pi$, then $\pi'$ will be compared to all $\pi\in\Pi$, specifically, compared to the current Pareto frontier $\pi\in\Pi^*$ and the non-Pareto frontier $\pi \in \Pi \setminus \Pi^*$. 
    
    If $\pi'$ is not in the Pareto frontier, then $\exists \pi^*\in\Pi^* $ s.t. $\boldsymbol{w}^T\boldsymbol{R}(\sigma')<\boldsymbol{w}^T\boldsymbol{R}(\sigma^*)$ for all $\boldsymbol{w}\in W_I$, where $\boldsymbol{R}(\sigma)=\sum_{t=0,(s_t,a_t)\sim \sigma}^H \gamma^t\boldsymbol{r}(s_t,a_t)$. Thus, through Algorithm \ref{alg:2}, $\pi'$ won't be included in the new Pareto frontier.
    
    If $\pi'$ is in the Pareto frontier, then $\nexists \pi^*\in\Pi^* $ s.t. $ \boldsymbol{w}^T\boldsymbol{R}(\sigma')<\boldsymbol{w}^T\boldsymbol{R}(\sigma^*)$ for all $\boldsymbol{w}\in W_I$. Thus, through Algorithm \ref{alg:2}, $\pi'$ will be included in the new Pareto frontier.
    That completes the proof.
\end{proof}

While linear weighting approaches ($\boldsymbol{w}^T\boldsymbol{R}$) discover only the convex Pareto frontier as in Algorithm \ref{alg:2}, Theorem \ref{thm:3} operates differently. 
By evaluating policies under unit vector weights $W_I$ via pairwise preference comparisons, we directly assess the Pareto dominance relationship. This allows identification of non-convex Pareto-optimal policies.
We provide another proof for Theorem \ref{thm:3} in Appendix \ref{app:proof}.

The theoretical analysis above has demonstrated that the preference-based multi-objective reinforcement learning framework can converge to the Pareto optimal set under specific conditions, providing important guarantees on its performance. Based on these results, we will describe the detailed steps of the algorithm in the next subsection, showing how this framework can be applied to optimize multi-objective policies in practical scenarios.

\subsection{Multi-Objective Reward Modeling}
\label{subsec:MO_reward_model}

Based on the theoretical foundations established in the previous section, we now focus on the practical implementation of Pb-MORL. In particular, we focus on constructing a multi-objective reward model that aligns with human preferences. 
By utilizing the preference data given by the teacher, we can develop an explicit reward model that captures the complexities of human decision-making.

Inspired by the previous work \cite{pbrl_basic2017} in the single-objective scenario, we construct a preference predictor $P_{\psi}[\sigma_0\succ\sigma_1 | \boldsymbol{w}]$, which is designed to predict the preference $p$ given the pair of segments $\sigma_0$ and $\sigma_1$ under the weight $\boldsymbol{w}$, and is parameterized by $\psi$. The preference predictor $P_{\psi}$ can be trained by minimizing the cross-entropy loss:
\begin{equation}
    \label{eq:PbMORL_CEloss}
    \begin{aligned}
        \mathcal{L}^{\mathrm{p}} =
        -\underset{(\sigma_0,\sigma_1,\boldsymbol{w},p) \sim \mathcal{D}}{\mathbb{E}}
        \Big[ & p(0) \log P_\psi[\sigma_0\succ\sigma_1 | \boldsymbol{w}]          \\
              & + p(1) \log P_\psi[\sigma_1\succ\sigma_0 | \boldsymbol{w}] \Big].
    \end{aligned}
\end{equation}
Utilizing the Bradley-Terry model \cite{bradley-terry, pbrl_basic2017}, an explicit reward model $\hat{\boldsymbol{r}}_\psi$ can be constructed to predict the preference as follows:
\begin{equation}
    \label{eq:PbMORL_reward_model}
    P_\psi[\sigma_1\succ\sigma_0 | \boldsymbol{w}] =
    \frac{\exp\sum_t \gamma^t \boldsymbol{w}^T\hat{\boldsymbol{r}}_\psi(s_t^1,a_t^1)}
    {\sum_{i\in\{0,1\}}\exp\sum_t \gamma^t \boldsymbol{w}^T\hat{\boldsymbol{r}}_\psi(s_t^i,a_t^i)}.
\end{equation}
\revise{Eq. \eqref{eq:PbMORL_reward_model} models preferences as probabilistic outcomes, thereby accommodating the inherent ambiguity found in human judgments.}
Specifically, it suggests that the preference is exponentially related to the reward sum over the segment. 
Then, the reward model $\hat{\boldsymbol{r}}_\psi$ is trained to predict the preference under the weight $\boldsymbol{w}$.
Although the estimator $\hat r_\psi$ is not inherently a binary classifier, the process of learning this estimator can be regarded as a binary classification, where the preferences $p$ serve as the classification labels.

In the previous discussion, we introduce how to leverage the preference data to construct a reward model. 
Theoretically, when the reward model $\boldsymbol{r}$ aligns perfectly with the teacher's preferences, we can directly optimize this model to derive the optimal policy. 
To formalize this relationship, we present the following theorem:

\begin{theorem} \label{thm:4}
If the reward model $\hat{\boldsymbol{r}}$ is perfectly aligned with the teacher's preferences, that is, for segments $(\sigma_0, \sigma_1)$ with arbitrary length $H$, 
\begin{equation}
\begin{aligned}
& \sigma_0 \succ_{\boldsymbol{w}} \sigma_1 
\iff \\
& ~~~ \sum_{(s,a)\sim \sigma_0} \gamma^t \boldsymbol{w}^T \hat{\boldsymbol{r}}(s_t,a_t) >
\sum_{(s,a)\sim \sigma_1} \gamma^t \boldsymbol{w}^T \hat{\boldsymbol{r}}(s_t,a_t).
\end{aligned}
\end{equation}
Since the segment length $H$ can be arbitrarily long, the above equation is equivalent to
\begin{equation}
\begin{aligned}
& \pi_0 \succ_{\boldsymbol{w}} \pi_1 
\iff \\
& ~~~ \mathbb{E}_{\tau\sim \pi_0} \sum_{t=0}^\infty \gamma^t \boldsymbol{w}^T \hat{\boldsymbol{r}}(s_t,a_t) >
\mathbb{E}_{\tau\sim \pi_1} \sum_{t=0}^\infty \gamma^t \boldsymbol{w}^T \hat{\boldsymbol{r}}(s_t,a_t).
\end{aligned}
\end{equation}
Then, under a given weight vector $\boldsymbol{w}$, maximizing the discounted return
\begin{equation}
J(\pi) = \sum_{t=0}^{\infty} \gamma^t \boldsymbol{w}^T \hat{\boldsymbol{r}}(s_t, a_t)
\end{equation}
is equivalent to selecting the optimal policy \revise{$\pi^*(\cdot | \cdot, \boldsymbol{w})$}.

\begin{proof}
For contradiction, assume that there exists another policy $\pi'$ that performs better than $\pi^*$ under the weight vector $\boldsymbol{w}$, i.e.,
\begin{equation}
\sum_{t=0}^{\infty} \gamma^t \boldsymbol{w}^T \hat{\boldsymbol{r}} (s_t', a_t')
> 
\sum_{t=0}^{\infty} \gamma^t \boldsymbol{w}^T \hat{\boldsymbol{r}} (s_t^*, a_t^*),
\end{equation}
where $(s_t', a_t')$ and $(s_t^*, a_t^*)$ are from the trajectories generated by policies $\pi'$ and $\pi^*$, respectively. In this case, the teacher would prefer the trajectory of $\pi'$ over that of $\pi^*$.

However, since the reward model $\boldsymbol{r}$ is perfectly aligned with the teacher, we have:
\begin{equation}
\begin{aligned}
& \pi' \prec_{\boldsymbol{w}} \pi^*  \Longrightarrow
\\
& ~~~ 
\sum_{t=0}^{\infty} \gamma^t \boldsymbol{w}^T \hat{\boldsymbol{r}} (s_t', a_t') < 
\sum_{t=0}^{\infty} \gamma^t \boldsymbol{w}^T \hat{\boldsymbol{r}} (s_t^*, a_t^*),
\end{aligned}
\end{equation}
which contradicts the fact that $\pi^*$ is the optimal policy under the reward model $\hat{\boldsymbol{r}}$. Therefore, the assumption is false, and the theorem holds.
\end{proof}
\end{theorem}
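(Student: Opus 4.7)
The plan is to argue by contradiction in both directions, essentially reading the author's alignment hypothesis as a monotone isomorphism between the teacher's preference ordering and the scalar functional $J(\pi) = \mathbb{E}_{\tau\sim\pi}\sum_{t=0}^\infty \gamma^t \boldsymbol{w}^T\hat{\boldsymbol{r}}(s_t,a_t)$. First I would fix an arbitrary weight vector $\boldsymbol{w}$ and let $\pi^*(\cdot|\cdot,\boldsymbol{w})$ denote the optimal policy under the teacher, i.e.\ a policy such that no other $\pi'$ satisfies $\pi' \succ_{\boldsymbol{w}} \pi^*$. My aim is to show that $\pi^*$ is exactly a maximizer of $J$.

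For the forward direction, I would suppose, toward a contradiction, that some $\pi'$ achieves $J(\pi') > J(\pi^*)$. Applying the policy-level alignment displayed in the theorem statement, the strict inequality in expected discounted weighted return is equivalent to $\pi' \succ_{\boldsymbol{w}} \pi^*$, contradicting preference-optimality of $\pi^*$. For the reverse direction, I would assume $\pi^*$ maximizes $J$ but fails to be preference-optimal, so some $\pi'$ satisfies $\pi' \succ_{\boldsymbol{w}} \pi^*$; the alignment hypothesis then forces $J(\pi') > J(\pi^*)$, again a contradiction. The two-sided argument yields the claimed equivalence.

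The main obstacle I anticipate is not the contradiction argument itself---that is essentially a tautology once the alignment is phrased at the policy level---but rather justifying the passage from the segment-level alignment (which is all the teacher literally provides, on finite segments of length $H$) to the policy-level alignment used in the proof. The theorem invokes this extension by the phrase \emph{since the segment length $H$ can be arbitrarily long}, but making it rigorous requires a truncation argument in the spirit of Theorem~\ref{thm:1}: by Assumption~\ref{asp:2}, the tail satisfies $\bigl|\sum_{t\ge H}\gamma^t\boldsymbol{w}^T\hat{\boldsymbol{r}}(s_t,a_t)\bigr| \le \gamma^H r_\text{max}/(1-\gamma)$, which vanishes as $H\to\infty$, so any strict gap between $J(\pi')$ and $J(\pi^*)$ eventually manifests as a strict gap in the finite-$H$ discounted return of representative segments drawn from $\pi'$ and $\pi^*$. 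By the segment-level alignment, the teacher then strictly prefers segments of $\pi'$ to those of $\pi^*$ in a consistent (per Assumption~\ref{asp:1}) manner, which is exactly the policy-level preference $\pi' \succ_{\boldsymbol{w}} \pi^*$ needed in the contradiction. Once that lift is established, the rest of the proof collapses into the short argument sketched above, and both directions of the equivalence follow immediately.
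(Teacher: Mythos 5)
Your proposal is correct and takes essentially the same route as the paper: a contradiction argument that reads the policy-level alignment as a monotone correspondence between the teacher's ordering and $J$; the paper's own proof is in fact only the forward direction of your two-sided argument. Your explicit truncation argument for lifting the segment-level alignment to the policy level (which the paper simply asserts with ``since $H$ can be arbitrarily long'') is a welcome addition in rigor, correctly mirroring the tail bound $\gamma^H r_\text{max}/(1-\gamma)$ used in the proof of Theorem~\ref{thm:1}.
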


\subsection{MORL based on Multi-Objective Reward Model}

Having established the construction method of the reward model $\hat{\boldsymbol{r}}_\psi$, we now focus on implementing the Pb-MORL algorithm. Specifically, we leverage the learned reward model as a substitute for the traditional reward function, enabling the direct application of existing MORL techniques.

In typical MORL training process, the algorithm collects transitions $(s, a, s', \boldsymbol{r}, \boldsymbol{w})$, which are composed of state $s$, action $a$, next state $s'$, multi-objective reward $\boldsymbol{r}$ and the weight vector $\boldsymbol{w}$. These transitions are then utilized to update value functions and policies.
In contrast, our method collects transitions where the reward $\boldsymbol{r}$ is replaced with the predicted reward from the model, $\boldsymbol{\hat{r}}_\psi$. This allows us to align the policy with preference data by minimizing the loss as Eq. \eqref{eq:PbMORL_CEloss}. 
\revise{This leads to a straightforward implementation of Pb-MORL. 
We first train the multi-objective reward model, followed by conducting MORL training based on this reward model. }

However, directly using the reward model to train an MORL agent may potentially result in inefficient policy learning:
\begin{enumerate}
    \item Insufficient amount of preference data: To train a high-quality reward model, a substantial amount of preference data may need to be collected beforehand.
    \item \revise{Imprecise reward model: 
    When the amount of preference data is insufficient, the reward model may overfit the limited training data, resulting in an imprecise reward model and consequently leading to suboptimal policy performance.}
\end{enumerate}

Below are two techniques that can help improve sample efficiency and performance.
\begin{itemize}
    \item Continuous preference collection: Continuously gather preference data during training, which can enrich the training data of the reward model.
    \item Relabeling: Relabel historical data with the updated reward model, which can increase the sample efficiency of preference and transition data.
\end{itemize}

Based on the above techniques, we present Algorithm \ref{alg:Pb-EQL}, which is a variant of the Pb-MORL approach discussed above. 
By integrating the Envelope multi-objective Q-learning (EQL) \cite{EQL} into our learning process, Algorithm \ref{alg:Pb-EQL} achieves a simple yet effective approach for policy optimization.
Specifically, in lines 3-13, the agent interacts with the environment to collect transition data.
In lines 14-22, the multi-objective reward model is updated continuously during policy training.
In line 23, the rewards of the transition data in the replay buffer are relabeled.
In lines 26-29, the Q-function and policy are updated using the EQL method.

\begin{algorithm}[htbp]
\caption{Pb-MORL algorithm using the EQL method}
\label{alg:Pb-EQL}
\begin{algorithmic}[1]
\REQUIRE Frequency of teacher feedback $K$, number of sampled segment $N_s$, number of sample weights $N_w$ for reward learning, timesteps for learning start $T_0$
\ENSURE Multi-objective reward model $\boldsymbol{\hat{r}}_\psi$, multi-objective Q function $\boldsymbol{Q}_{\theta}$, policy \revise{$\pi_\phi(\cdot | \cdot, \boldsymbol{w})$}
    \STATE Initialize parameter vectors $\psi, \theta, \phi$
    \FOR{each iteration}
        \FOR{each environment step $t$}
            \STATE Obtain current state $s_t$
            \IF{global step $<T_0$}
                \STATE Randomly sample action $a_t$
            \ELSE
                \STATE Obtain action \revise{$a_t\sim \pi_\phi(\cdot | s_t, \boldsymbol{w})$} under a weight $\boldsymbol{w}$
            \ENDIF
            \STATE Obtain transition $(s_t,a_t,s_{t+1}, \boldsymbol{w})$ 
            \STATE Obtain multi-objective reward $\hat{\boldsymbol{r}}(s_t,a_t)$
            \STATE Add $(s_t,a_t,s_{t+1},\hat{\boldsymbol{r}}(s_t,a_t), \boldsymbol{w})$ into replay buffer $\mathcal{D}$
        \ENDFOR
        \IF{iteration mod K == 0}
            \STATE Sample $N_s$ query $(\sigma_0, \sigma_1)\sim \mathcal{D}$
            \STATE Sample $N_w$ weights $\boldsymbol{w}$
            \STATE Query overseer for preference $p$ for all queries $(\sigma_0, \sigma_1)$ under all $\boldsymbol{w}$
            \STATE Store all $N_s\times N_w ~ (\sigma_0, \sigma_1, \boldsymbol{w}, p)$ to buffer $\mathcal{D}_p$
            \FOR{each gradient step}
                \STATE Sample minibatch $(\sigma_0, \sigma_1, \boldsymbol{w}, p)\sim \mathcal{D}_p$
                \STATE Optimize Eq. \eqref{eq:PbMORL_CEloss} to update reward model $\boldsymbol{\hat{r}}_\psi$
            \ENDFOR
            \STATE Relabel entire replay buffer $\mathcal{D}$ using $\boldsymbol{\hat{r}}_\psi$
        \ENDIF
        \FOR{each gradient step}
            \STATE Sample a minibatch from replay buffer $\mathcal{D}$
            \STATE Update Q function $\boldsymbol{Q}_{\theta}$ by minimizing $|\boldsymbol{Q}-B\boldsymbol{Q}|$ under $(s,a,s',\boldsymbol{r}, \boldsymbol{w}) \sim \mathcal{D}$, as Eq. \eqref{eq:B_optimal}
            
            \STATE \revise{Update the Q-learning policy \revise{$\pi_\phi(\cdot | \cdot, \boldsymbol{w})$} by maximizing $\boldsymbol{w}^T \boldsymbol{Q} (s, \pi_\phi(\cdot | s, \boldsymbol{w}), \boldsymbol{w})$ under $(s,a,s',\boldsymbol{r}, \boldsymbol{w}) \sim \mathcal{D}$}
        \ENDFOR
    \ENDFOR
\end{algorithmic}
\end{algorithm}

\begin{figure*} [t]
    \centering
    \includegraphics[width=0.85\textwidth]{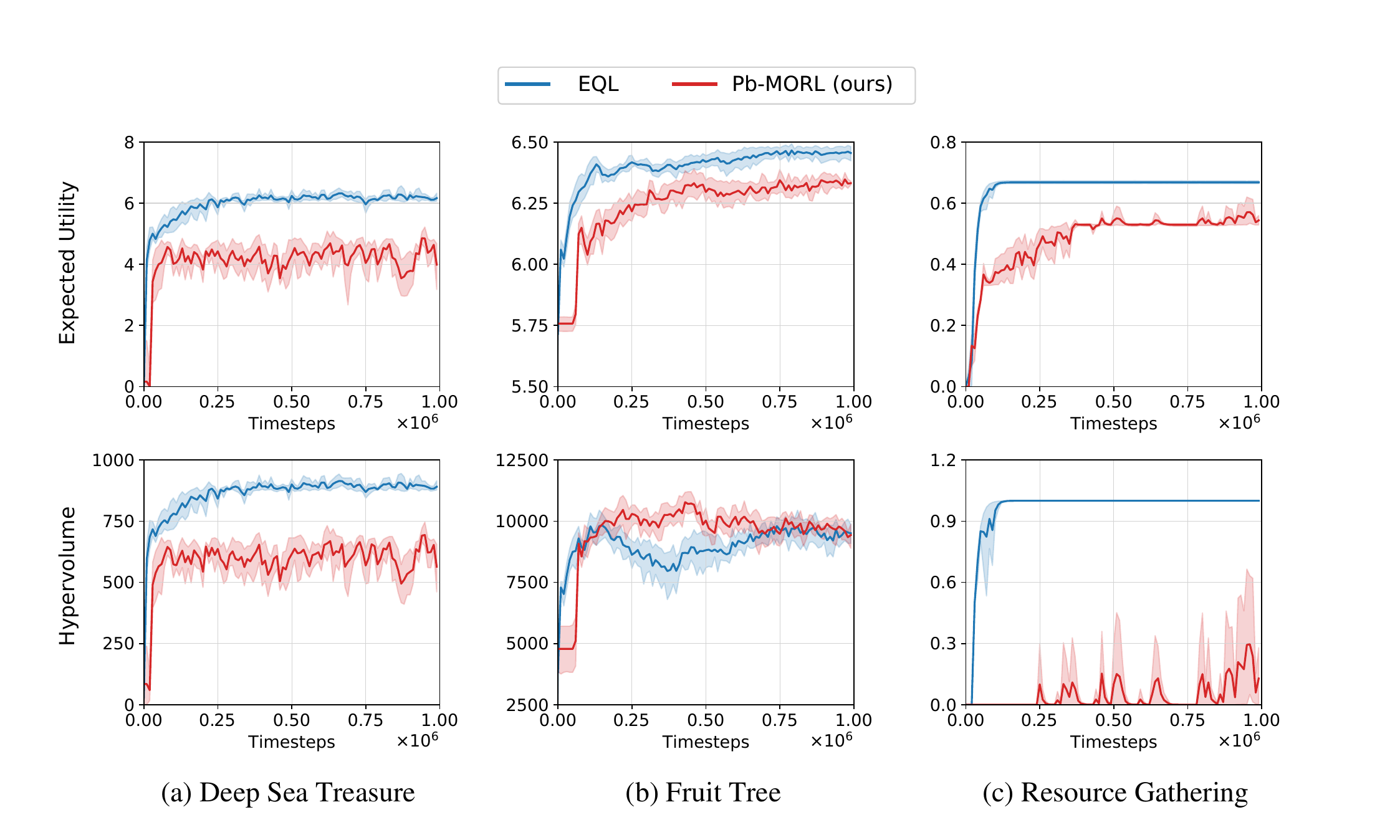}
    \caption{The training curves of the expected utility and hypervolume on three multi-objective benchmark tasks. The experiments are conducted on 5 random seeds. Blue: the oracle method (EQL). Red: our method. }
    \label{fig:curve_mogym} 
\end{figure*}

\section{Experimental Results} \label{sec:experiment}

\subsection{Setups}
\label{subsec:setup}

In this section, we conduct several experiments to evaluate the effectiveness of the proposed method. 
We test Pb-MORL on several benchmark multi-objective tasks \cite{EQL, PGMORL, muni_cac2024} to demonstrate its effectiveness across diverse multi-objective settings. \revise{Additionally, we evaluate Pb-MORL on a custom task for multi-energy storage system charging and discharging as well as an autonomous driving task on a multi-line highway}, showing its potential for real-world industrial applications.

\textbf{Construction of the multi-objective teacher.}
Similar to prior PbRL works \cite{surf, pebble, clarify}, in order to systemically evaluate the performance, we construct a ``scripted teacher'', which provides preferences $p$ between two trajectory segments $\sigma_0,\sigma_1$ under certain weight $\boldsymbol{w}$ according to the task's ground truth reward function. 
The following paragraph formalizes the process used by the scripted teacher in multi-objective RL.

Let $\boldsymbol{r}_{gt}$ denote the task's ground truth reward function.
Then, for the segment pair $(\sigma_0,\sigma_1)$, the scripted teacher first computes the discounted reward sum:
\begin{equation}
\boldsymbol{R}_i = \sum_{t=0}^{H-1} \gamma^t \boldsymbol{r}_{gt}(s^i_{t},a^i_{t}) \quad i=0,1, 
\end{equation}
where $\gamma$ is the discount factor, $t$ is the time step, $s^i_{t},a^i_{t}$ represents the state and action of segment $\sigma_i$ in time step $t$, and $H$ is the segment length.
Next, the teacher computes the weighted inner product with the given weight vector $\boldsymbol{w}$:
\begin{equation}
    R_i = \boldsymbol{w}^T \boldsymbol{R}_i = \boldsymbol{w}^T \left( \sum_{t=0}^{H-1} \gamma^t \boldsymbol{r}_{gt}(s^i_{t},a^i_{t}) \right)\quad i=0,1,
\end{equation}
The scripted teacher compares $R_0$ and $R_1$ to determine which segment performs better:
\begin{equation}
p =
\begin{cases}
1, & \text{if } R_0 > R_1, \\
0.5, & \text{if } R_0 = R_1, \\
0, & \text{if } R_0 < R_1.
\end{cases}
\end{equation}
Since the scripted teacher’s preferences directly correspond to the task's ground truth reward, the algorithms can be quantitatively evaluated using the ground truth reward function.

\textbf{Evaluation metrics.} We use two metrics to evaluate the empirical performance on each task:

\begin{enumerate}
    \item \textbf{Expected Utility (EU)} \cite{MO-gym}: This metric measures the average utility under randomly sampled weights. Let $\boldsymbol{w}$ be a weight vector randomly sampled from the uniform distribution in $\mathcal{W}$ space. Let $U(\pi, \boldsymbol{w})$ represent the utility function of policy \revise{$\pi(\cdot|\cdot, \boldsymbol{w})$} under the weight $\boldsymbol{w}$, which is usually the inner product of the discounted return and the weight $\boldsymbol{w}$. The expected utility $\mathrm{EU}(\pi)$ is then defined as:
    \begin{equation}
    \mathrm{EU}(\pi) = \mathbb{E}_{\boldsymbol{w}} U(\pi, \boldsymbol{w}).
    \end{equation}
    Expected Utility is crucial for evaluation, as it comprehensively measures a policy's overall performance across objectives. 
    Unlike hypervolume \cite{hypervolume}, which focuses on boundary solutions, EU evaluates the policy's average behavior over the entire weight space. Thus, it serves as a more relevant indicator of general performance in many multi-objective tasks.

    \item \textbf{Hypervolume (HV)} \cite{hypervolume} : Given an approximate Pareto Frontier set $\tilde{\mathbf{F}}$ of multi-objective return and a reference point $\boldsymbol{R}_{\mathrm{ref}}$, the hypervolume metric is defined as:
    \begin{equation}
    \mathrm{HV}(\tilde{\mathbf{F}},\boldsymbol{R}_{\mathrm{ref}}) = 
    \bigcup_{\boldsymbol{R} \in \tilde{\mathbf{F}}} \mathrm{volume}(\boldsymbol{R}_{\mathrm{ref}}, \boldsymbol{R}),
    \end{equation}
    where $\mathrm{volume}(\boldsymbol{R}_{\mathrm{ref}}, \boldsymbol{R})$ is the volume of the hypercube spanned by the reference vector $\boldsymbol{R}_{\mathrm{ref}}$ and the vector $\boldsymbol{R}$. The reference point here is typically an estimation of the worst possible return for all objectives.
\end{enumerate}

\textbf{Experimental details.} 
For implementation details, for line 12 of Algorithm \ref{alg:Pb-EQL}, we use a query-policy aligned replay buffer to maintain an accurate reward model in the near-policy region \cite{QPA}.
For line 17, we use the scripted teacher mentioned earlier to generate preference data.
The detailed hyperparameter settings of Pb-MORL are shown in Table \ref{tab:hyperparam}.
For the baseline, we use EQL \cite{EQL} as an oracle method, which leverages the ground truth reward function for policy learning.

\begin{table}[t]
\caption{Hyperparameter settings for Pb-MORL}
\label{tab:hyperparam}
\centering
\begin{tabular}{lc}
\toprule
\multicolumn{1}{c}{Hyperparameter} & Value\\
\midrule
Preference frequency $K$            & $500$ \\
Number of sampled segment $N_s$     & $300$ \\
Number of sample weights $N_w$      & $10$ \\
Discount factor $\gamma$            & $0.99$ \\
Batch size                          & $256$ \\
Learning rate                       & $3\times 10^{-4}$ \\
Training timesteps                  & $1\times 10^6$ \\
Number of Q network hidden layers   & $2$ \\
Number of hidden units per layer    & $128$ \\
Q target update $\tau$              & $1\times 10^{-4}$ \\
Optimizer                           & Adam \\
\bottomrule
\end{tabular}
\end{table}

\begin{figure} [t]
    \centering
    \includegraphics[width=0.6\linewidth]{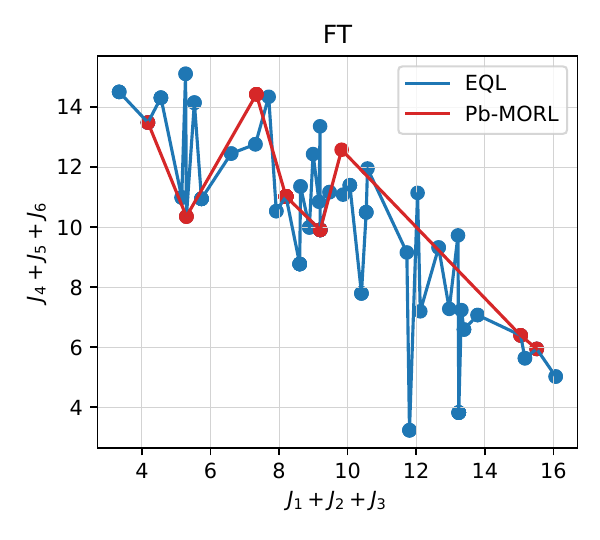}
    \caption{Visualization of the estimated Pareto frontier of two methods in FT. Note that the actual Pareto frontier in FT has 6 dimensions, we add up the first 3 and last 3 dimensions of rewards for illustration.}
    \label{fig:pareto} 
\end{figure}

\subsection{Experimental Results on Multi-Objective Benchmark Tasks}
\label{subsec:results_mogym}

\textbf{Tasks. }  
We evaluate our method on three multi-objective benchmark tasks \cite{MO-gym}, each presenting distinct challenges, such as balancing time-cost and total reward or optimizing independently across multiple objectives:
\begin{itemize}
    \item \textbf{Deep Sea Treasure (DST) \cite{deep_see_treasure}}: 
    An agent controls a submarine in a 10$\times$11 grid to discover treasures, balancing time and treasure value. The grid contains 10 treasures, with the value increasing with the distance from the starting point $s_0 = (0, 0)$. The multi-objective reward $\boldsymbol{r}(s, a)$ has two dimensions: $r_1(s, a)$ for treasure value and $r_2(s, a)$ for time cost, decreasing by 1 for each step. 
   
    \item \textbf{Fruit Tree (FT) \cite{EQL}}: 
    A full binary tree provides a six-dimensional reward $\boldsymbol{r}\in \mathbb{R}^6$ at each leaf, representing nutritional components: \textsc{\{Protein, Carbs, Fats, Vitamins, Minerals, Water\}}. The agent maximizes utility for a given weight by selecting the optimal path from root to leaf while choosing between the left and right subtrees. 
    
    \item \textbf{Resource Gathering (RG) \cite{resource_gathering}}: 
    An agent collects the gold or gem in a 5$\times$5 grid while evading two enemies. Encountering an enemy in the same cell poses a 10\% risk of death. The multi-objective reward $\boldsymbol{r}(s, a)$ has three dimensions: $r_1(s, a) = -1$ if being killed, $r_2(s, a) = +1$ if safely returning home with gold, $r_3(s, a) = +1$ if returning with gem. 
\end{itemize}

To justify the selection of $H$ values for each task, we analyze their characteristics. 
For the DST task, where episode length varies and rewards accumulate over time, we choose $H = 7$ to capture the cumulative effects. 
In the FT task, with a fixed episode length of $6$, $H = 6$ is suitable to encompass the full trajectory. 
For the RG task, which features sparse rewards and early episode termination, we select $H = 10$ to ensure enough steps are available to differentiate between policies.

Figure \ref{fig:curve_mogym} presents the expected utility and hypervolume results for the three tasks. 
In the DST task, our method performs comparably to the oracle in expected utility, demonstrating consistent utility improvement over time. 
For the FT task, our method matches the oracle in expected utility and surpasses it in hypervolume, indicating effective use of preference for enhancing the Pareto frontier quality. 
In the RG task, while our method's utility approaches optimal performance, the hypervolume results are less favorable.
This may be because the returns of RG are limited to $0$ or $\pm1$, and the learned reward model exhibits imprecision in capturing edge-cases under these sparse rewards, which restricts hypervolume growth.

To demonstrate the high quality of the Pareto frontier we learned, we visualized the Pareto frontier learned by EQL and our method in the FT task, as shown in Fig. \ref{fig:pareto}. 
Our method captures the key factors of the Pareto frontier of the oracle method, showing its effectiveness for application in practice.

\begin{figure}[htbp]
    \centering
    \includegraphics[width=\linewidth]{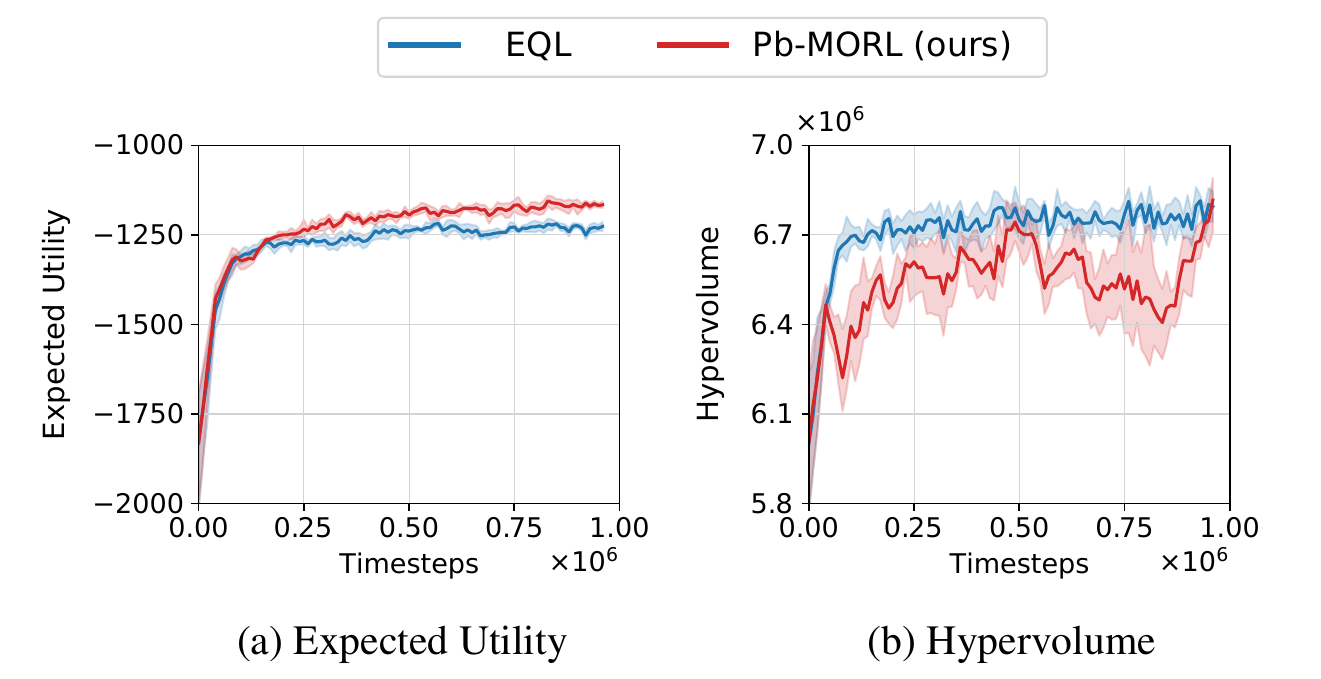}
    \caption{The training curves of the expected utility and hypervolume on the energy system task, averaging over 5 random seeds. Blue: the oracle method (EQL). Red: our method. }
    \label{fig:curve_energy} 
\end{figure}

\subsection{Experimental Results on the Custom Energy Task}
\label{subsec:results_energy}

\textbf{The multi-energy management task. }  
To assess the potential of Pb-MORL for real-world industry applications, we designed a custom multi-objective task for multi-energy storage, simulating the charging and discharging of an energy storage system. The agent controls discharge and charge levels to satisfy external energy demands while balancing cost savings and system lifespan.

\begin{itemize}
\item \textbf{State space}: 
The state space includes four scalar values:
Current stored energy $s_\text{storage}$ (kWh), 
current energy generated from renewable sources $s_\text{new}$ (kWh), 
external energy demand $s_\text{demand}$ (kWh), 
and the electricity market price $s_\text{price}$ (monetary units).
Thus, the state vector can be represented as: 
\begin{equation}
s = [s_\text{storage}, s_\text{new}, s_\text{demand}, s_\text{price}].
\end{equation}

\item \textbf{Action space}:
The action $a$ is a scalar indicating the discharge level. Positive values represent energy discharged to meet external demand, while negative values indicate energy charged from renewable sources or the grid.

\item \textbf{Transition}: 
After a state transition, the new storage level is calculated as:
\begin{equation}
s_{\text{storage}, t+1} = \min(s_{\text{storage}}^\text{max}, (s_{\text{storage}, t} - {a}_t)^+),
\end{equation}
where $s_{\text{storage}}^\text{max}$ is the maximum capacity of the energy storage, and $(\cdot)^+$ denote $\max(\cdot,0)$.

\item \textbf{Reward function}: 
The reward is a two-dimensional vector, where 
the first dimension $r_1(s_t, a_t)$ penalizes the electricity purchasing cost.
At each time step, the system may purchase energy to satisfy the external energy demand and charge the storage.
The amount of energy bought for charging is 
\begin{equation}
    g_\text{charge}=
    \begin{cases}
        (-a-(s_\text{new}-s_\text{demand})^+)^+& a<0\\
        (a-s_\text{storage})^+& a\ge0\\
    \end{cases},
\end{equation}
and that for external demand is 
\begin{equation}
    g_\text{demand}=
    ((s_\text{demand}-s_\text{new})^+-(a)^+)^+,
\end{equation}
$r_1(s_t, a_t)$ is calculated as $r_1(s_t, a_t)=s_{\text{price}}\times(g_\text{demand}+g_\text{charge})$.
The second dimension $r_2(s_t, a_t)$ indicates a penalty for discharging: $r_2(s_t, a_t)=-1$ when energy is discharged and $0$ otherwise. This design aims to reduce discharges, thus prolonging the system's lifespan.

\end{itemize}

In this task, rewards are cumulative, with a maximum episode length of $50$. Since the agent does not face failures leading to early termination, a sufficiently long $H$ is critical for capturing long-term policy performance. Setting $H = 10$ allows for comprehensive observation of cumulative returns, enabling effective differentiation among policy performances during optimization.

Figure \ref{fig:curve_energy} presents the experimental results of the multi-energy management task. 
Our method surpasses the oracle method in expected utility. This can be attributed to the task's inherent randomness and complex transition dynamics, which make it challenging to directly optimize task rewards like electricity costs or the lifespan loss of system charging. 
Instead, preference provides a more flexible way to guide policy optimization, allowing the policy to adapt to system complexities more effectively. 
Additionally, our method matches the oracle in the hypervolume metric.

The weight-conditioned policy $\pi(a|s, w)$ learned by Pb-MORL allows dynamic adaptation to changing user preferences.
For instance, in energy management, operators may prioritize cost reduction during peak pricing periods ($w_1 \uparrow$) and system longevity during high-stress operations ($w_2 \uparrow$). 
Since Pb-MORL trains a single policy conditioned on arbitrary weights $w \in \mathcal{W}$, adapting to such changes only requires modifying the input weight vector $w$ at deployment, eliminating the need for policy retraining. 
Similarly, in autonomous driving in Section \ref{subsec:results_highway}, safety weights can be adjusted during adverse weather by simply updating $w$. 
This adaptability minimizes operational overhead and enables Pb-MORL to respond instantly to evolving objectives, making it well-suited for dynamic environments with non-stationary preferences.

\begin{figure}[htbp]
    \centering
    \includegraphics[width=0.9\linewidth]{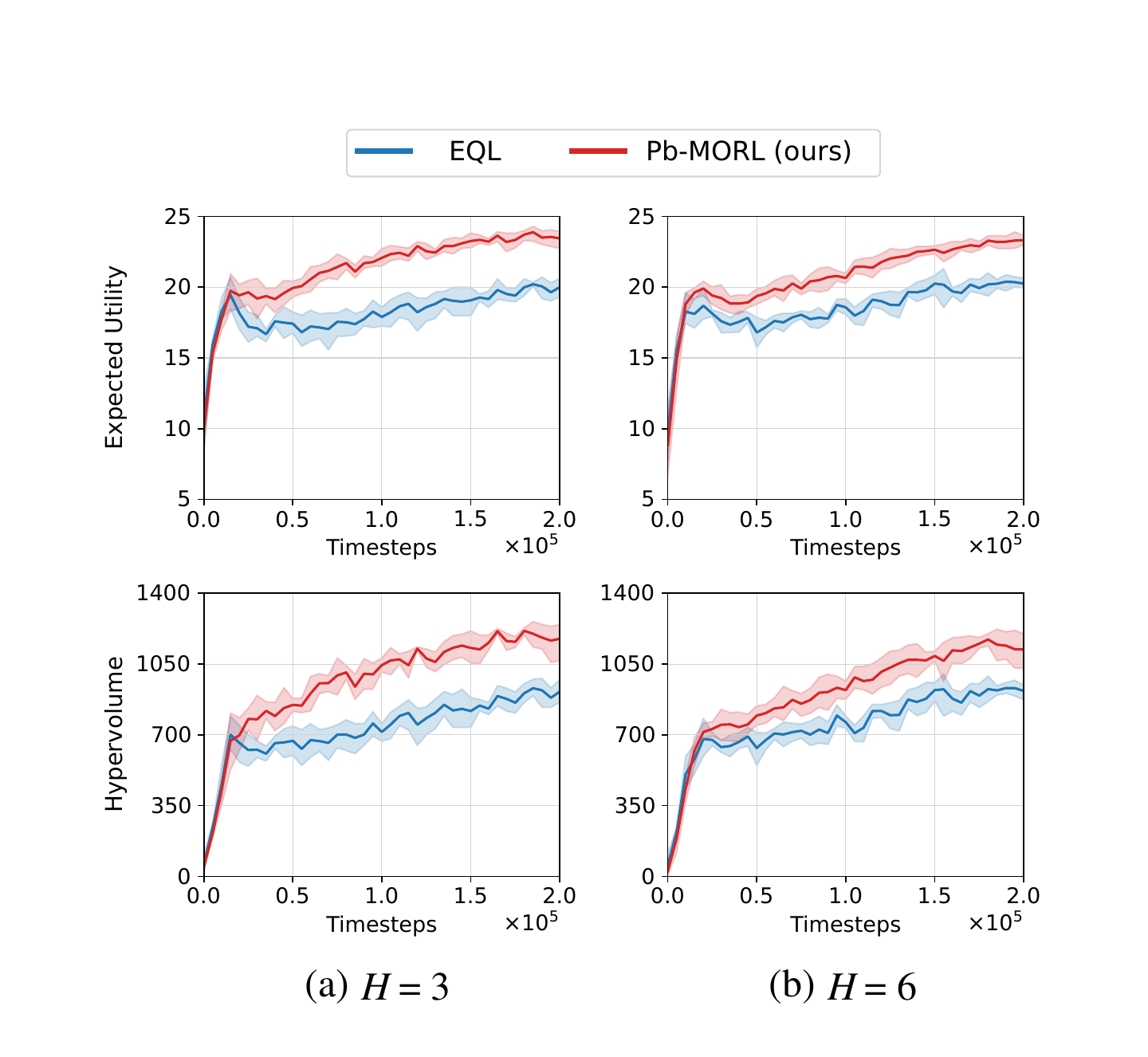}
    \caption{\revise{The training curves of the expected utility and hypervolume on the highway task, averaging over 5 random seeds. Blue: the oracle method (EQL). Red: our method.} }
    \label{fig:curve_highway} 
\end{figure}

\subsection{\revise{Experimental Results on the Multi-Lane Highway Task}}
\label{subsec:results_highway}

\revise{
\textbf{The multi-lane highway task. }  
To validate the effectiveness of our approach in real-world complex control scenarios, we evaluate it in a multi-lane highway task \cite{MO-gym, highway}. In this task, the agent navigates a three-lane highway while driving as quickly as possible, avoiding collisions and prioritizing positioning in the rightmost lane. 
This setting comprehensively tests the agent's ability to perform in dynamic and multi-faceted environments.
}
\begin{itemize}
\item \revise{\textbf{State space}: 
The state is represented by a $V \times 5$ matrix that includes the coordinates and speeds of the ego vehicle and $V-1$ surrounding vehicles. Each line consists of [presence of the vehicle, $x$ coordinate, $y$ coordinate, $x$ velocity, $y$ velocity].
}

\item \revise{\textbf{Action space}:
Actions are categorized discretely as follows: lane change to the left (0), maintaining the current state (1), lane change to the right (2), acceleration (3), and deceleration (4). 
These actions are integrated with a lower-level controller for speed and steering.
}

\item \revise{\textbf{Transition}:
The vehicle kinematic is modeled using a simplified kinematic bicycle model, which assumes the left and right wheels function as a single wheel. It regards the front wheel as the primary steering control while omitting sliding effects, thus enabling a more straightforward representation of vehicle dynamics. 
This model formulation captures the essential dynamics of real-world vehicle behavior, enhancing the simulation's fidelity.
The following equations describe the vehicle's motion:
\begin{align*}
\begin{cases}
\dot{x} &= v \cos(\theta + \beta), \\
\dot{y} &= v \sin(\theta + \beta), \\
\dot{v} &= a, \\
\dot{\theta} &= \frac{v}{l_r} \tan(\delta), \\
\beta &= \arctan\left(\frac{l_r}{l_f + l_r} \tan(\delta)\right).
\end{cases}
\end{align*}
The surrounding vehicles are controlled by the Intelligent Driver Model (IDM) \cite{idm} and the Minimum Overall Braking Distance (MOBIL) model \cite{mobil}. 
}

\item \revise{\textbf{Reward function}: 
The reward function comprises a three-dimensional vector. 
The first element represents speed reward, calculated as $\frac{v - v_{\text{min}}}{v_{\text{max}} - v_{\text{min}}}$, where $v$ is the current speed of the ego vehicle, and $v_{\text{min}}$ and $v_{\text{max}}$ denote the minimum and maximum allowable speeds, respectively. 
The second element indicates lane position reward, as $1$ if the ego vehicle is in the rightmost lane and $0$ otherwise. 
The third element is a collision penalty, assigned $-1$ upon collision and $0$ otherwise. 
}

\end{itemize}

\revise{
We selected $H = 3$ and $H = 6$ for our evaluation. $H = 3$ corresponds to a low-level behavior over a 5-second period, while $H = 6$ represents longer driving behavior, allowing for a more comprehensive assessment of the agent's performance. By evaluating our method with both time horizons, we obtain a more nuanced understanding of its capabilities across different driving scenarios.
Following prior works \cite{highway2e5_1, highway2e5_2}, we trained the agent for 200,000 steps. 
}

\revise{
As shown in Figure \ref{fig:curve_highway}, our method surpasses the oracle method regarding both expected utility and hypervolume. 
In contrast to EQL, which experiences a significant performance decline after an initial increase followed by a slow recovery, our approach maintains stable performance without such setbacks. 
The initial drop in EQL may be because the agent becomes overly focused on immediate goals, such as speed, resulting in aggressive policies that often neglect safety. Consequently, when the negative impact of collision occurs, the agent must re-explore to find more stable policies.
In contrast, Pb-MORL addresses this overfitting through preference-driven reward learning. This method emphasizes the relative benefits of multiple objectives (e.g., ``safe overtaking $\succ$ aggressive overtaking'') rather than focusing on absolute value differences, enabling the reward model to learn the trade-offs of specific scenarios. 
Additionally, continuous preference feedback helps to recalibrate the reward model in three phases: balancing objectives, refining scene-specific policies and optimizing for long-tail risks. This approach effectively prevents oscillations caused by conflicting targets.
}

\revise{
In summary, Sections \ref{subsec:results_energy} and \ref{subsec:results_highway} demonstrate that our preference-guided policy outperforms the oracle across multiple metrics and adapts more effectively to complex systems than direct task reward optimization. 
Additionally, the resulting multi-objective policies exhibit strong interpretability, clearly showing how weight vectors impact policy behavior. 
These findings underscore the potential of our approach for optimizing complex real-world systems.
}

\section{Conclusion} \label{sec:conclusion}

This paper presents the preference-based multi-objective reinforcement learning (Pb-MORL) algorithm, which leverages preference data to overcome the limitations of complicated reward design.
Our contributions include a theoretical proof of optimality, showing the Pb-MORL framework can guide the learning of Pareto-optimal policies. 
In addition, we construct an explicit multi-objective reward model that directly aligns with user preferences, enabling more intuitive decision-making in complex scenarios. 
Extensive experiments demonstrate the effectiveness and interpretability of Pb-MORL in optimizing various types of multi-objective tasks. 
Through this work, we highlight the potential of preference-based frameworks in enhancing multi-objective optimization.

Future research can explore several directions. 
\revise{First, we recognize that some assumptions in our work may not hold in practical scenarios. Specifically, for the symmetry, consistency, and transitivity requirements in Assumption \ref{asp:1}, we can explore non-transitive cases through pairwise ranking methods \cite{lire} and utilize preference aggregation strategies to address violations of other properties.
Second, Assumption \ref{asp:teacher} could be relaxed through active query strategies \cite{mu2024sepoa} that optimize comparison requests.}
Third, to expand its utility in complex systems, we aim to apply Pb-MORL to various domains, such as financial investment and smart manufacturing. 
Notably, we provide an alternative perspective in Appendix \ref{app:colored_glasses}, discussing the motivation for Pb-MORL, highlighting the impact of human subjectivity in preference data on learning quality in traditional PbRL.

\section*{Acknowledgments}
This work is supported by the Beijing Natural Science Foundation (L233005), NSFC (No. 62125304, 62192751), the National Key Research and Development Program of China (2022YFA1004600), the 111 International Collaboration Project (B25027), and the BNRist project (BNR2024TD03003).

{
\appendices

\section{Additional Proof} \label{app:proof}

\begin{proof}[Another proof of Thm.\ref{thm:3}]
    We prove it by providing a constructive Algorithm \ref{alg:21}.
    
    If policy $\pi_i$ is never added into $\Pi^*$, there must exists a policy $\pi_a~a<i$ and $\boldsymbol{w}_k\in W_I$ s.t. $\boldsymbol{w}_k^T \boldsymbol{r}(\sigma_a)>\boldsymbol{w}_k^T \boldsymbol{r}(\sigma_j)$. Therefore $\pi_i$ is dominated by $\pi_a$, thus not being in the Pareto frontier.
    
    If policy $\pi_i$ is added into $\Pi^*$, and removed when traversing $\pi_b$, there must exists a $\boldsymbol{w}_k\in W_I$ s.t. $\boldsymbol{w}_k^T \boldsymbol{r}(\sigma_b)>\boldsymbol{w}_k^T \boldsymbol{r}(\sigma_j)$. Therefore $\pi_i$ is dominated by $\pi_b$, thus not being in the Pareto frontier.
    
    If policy $\pi_i$ is added into $\Pi^*$, and remains in the $\Pi^*$. Assume there is $\pi_c$ dominates $\pi_i$. If $c>i$, when traversing $\pi_c$, $\pi_i$ must be in $\Pi^*$ and be removed, which contradicts the algorithm. If $c<i$, $\pi_c$ or its dominator must be in $\Pi^*$ when traversing $\pi_i$, and $\pi_i$ must be removed, which also contradicts the algorithm. Therefore the assumption is false.
    
    Summarizing these all concludes the proof.
\end{proof}

\begin{algorithm}[htbp]
\caption{Using the teacher to obtain non-convex Pareto frontier, based on insertion sort}
\label{alg:21}
\begin{algorithmic}[1]
\STATE Initialize the estimated Pareto frontier $\Pi^*=\emptyset$
\FOR{each policy $\pi_i \in \Pi$}
    \FOR{each policy $\pi_j \in \Pi^*$}
        \IF{$\boldsymbol{w}_k^T \boldsymbol{R}(\sigma_i)>\boldsymbol{w}_k^T \boldsymbol{R}(\sigma_j)$ for each $\boldsymbol{w}_k\in W_I$}
            \STATE Remove $\pi_j(a|s)$ from $\Pi^*$
        \ELSIF{$\boldsymbol{w}_k^T \boldsymbol{R}(\sigma_i)<\boldsymbol{w}_k^T \boldsymbol{R}(\sigma_j)$ for each $\boldsymbol{w}_k\in W_I$}
            \STATE break
        \ENDIF
    \ENDFOR
        \STATE Add $\pi_i$ into $\Pi^*$
\ENDFOR
\end{algorithmic}
\end{algorithm}

\section{A Different Perspective on Pb-MORL Motivation} \label{app:colored_glasses}

From a different perspective, our motivation for proposing Pb-MORL comes from the common issue of subjective bias in preference data, which exists in traditional PbRL and any application that relies on human expert preferences. 
For example, in autonomous driving, some people prefer aggressive driving that prioritizes efficiency, while others prefer safer, more cautious driving. 
Similarly, in large language models using RLHF (Reinforcement Learning from Human Feedback), the preferences assigned by different annotators often conflict due to personal biases. 
This subjectivity not only affects learning quality but also makes it difficult for models to accommodate diverse needs.

However, we believe that these scenarios can be reframed as multi-objective optimization problems. 
To address this, we propose Pb-MORL, which models the problem as a multi-objective optimization task. 
This approach learns a multi-objective policy that can effectively handle different weights $\boldsymbol{w}$, enabling better use of preference data, improving learning quality, and meeting the diverse needs of individuals.

}

\bibliographystyle{IEEEtran}
\bibliography{ref/refs_our_works, ref/refs_MORL, ref/refs_PbRL, ref/refs_rl, ref/refs_theory, ref/refs_misc, ref/refs_highway}

\end{document}